\documentclass[11pt]{article}

\usepackage[margin=1in]{geometry}
\usepackage[T1]{fontenc}
\usepackage[utf8]{inputenc}
\usepackage{lmodern}
\usepackage{microtype}
\usepackage{amsmath,amssymb,amsthm}
\usepackage{booktabs,tabularx,array}
\usepackage{enumitem}
\usepackage{xcolor}
\usepackage{xurl}
\usepackage[numbers,sort&compress]{natbib}
\usepackage{graphicx}
\usepackage{float}
\usepackage{tikz}
\usepackage[section]{placeins}
\usepackage{hyperref}
\usetikzlibrary{arrows.meta,positioning}
\definecolor{zgInk}{HTML}{34424B}
\definecolor{zgLine}{HTML}{64717A}
\definecolor{zgBlue}{HTML}{ECF0FF}
\definecolor{zgMint}{HTML}{EAF5E7}
\definecolor{zgCream}{HTML}{FFF7D8}
\definecolor{zgPeach}{HTML}{FFF0E0}
\definecolor{zgRose}{HTML}{FCE9E9}
\definecolor{zgGray}{HTML}{F4F3F0}
\tikzset{
  zg node/.style={draw=zgLine,text=zgInk,fill=zgGray,
    line width=0.55pt,rounded corners=5pt},
  zg flow/.style={-{Latex[length=1.9mm,width=1.3mm]},
    draw=zgInk,line width=0.65pt,rounded corners=5pt},
  zg label/.style={text=zgInk}
}

\hypersetup{colorlinks=true,linkcolor=blue!50!black,
  citecolor=blue!50!black,urlcolor=blue!50!black,
  bookmarksnumbered=true,
  pdftitle={ZeroGate: Trust-Preserving Fast Paths for Governed AI Agent Runtimes},pdfauthor={Zexun Wang},
  pdfsubject={Full research report: design, implementation, evidence and deployment},
  pdfkeywords={agent governance, action pass, runtime security, CAVA, PCAA, ZeroGate}}
\setlist[itemize]{leftmargin=1.4em,itemsep=0.2em}
\setlist[enumerate]{leftmargin=1.6em,itemsep=0.2em}
\newcolumntype{L}[1]{>{\raggedright\arraybackslash}p{#1}}
\newcolumntype{Y}{>{\raggedright\arraybackslash}X}
\newtheorem{definition}{Definition}
\newtheorem{proposition}{Proposition}
\newcommand{\zerogate}{\textnormal{\textsc{ZeroGate}}}
\newcommand{\actionpass}{\textnormal{\textsc{ActionPass}}}
\newcommand{\pcaa}{\textnormal{\textsc{PCAA}}}
\newcommand{\cava}{\textnormal{\textsc{CAVA}}}
\newcommand{\SJSON}{\operatorname{SJSON}}
\newcommand{\Execute}{\ensuremath{\mathsf{execute}}}
\newcommand{\Review}{\ensuremath{\mathsf{review}}}
\newcommand{\Deny}{\ensuremath{\mathsf{deny}}}

\newcommand{\PlanCloudTrials}{5}
\newcommand{\PlanCloudPerCell}{160}
\newcommand{\PlanCloudModes}{2}
\newcommand{\PlanCloudLevels}{3}
\newcommand{\PlanCloudConcurrency}{1, 8, and 32}
\newcommand{\PlanCloudAttempts}{4,800}
\newcommand{\PlanSemanticScenarios}{73}
\newcommand{\PlanSemanticDomains}{4}
\newcommand{\PlanSemanticUnique}{292}
\newcommand{\PlanSemanticRepeats}{5}
\newcommand{\PlanSemanticInputs}{1,460}
\newcommand{\PlanSemanticMethods}{7}

\newcommand{\CloudRecords}{4800}
\newcommand{\CloudSuccesses}{4800}
\newcommand{\CloudFailures}{0}
\newcommand{\SemanticCases}{1460}
\newcommand{\SemanticEvaluations}{10220}
\newcommand{\SemanticUniqueCases}{292}
\newcommand{\FaultCases}{10}

\newcommand{\FastPninetyfiveLow}{9.802}
\newcommand{\FastPninetyfiveHigh}{11.374}
\newcommand{\SlowPninetyfiveLow}{25.018}
\newcommand{\SlowPninetyfiveHigh}{334.000}
\newcommand{\CloudResultsTable}{
\begin{tabular}{@{}rlrrrrr@{}}
\toprule
C & Mode & Success/attempts & Boundary p50 & p95 & p99 & Lifecycle p95 \\
\midrule
1 & Synchronous & 800/800 & 14.513 & 25.018 & 46.319 & 71.382 \\
1 & Prepared & 800/800 & 7.105 & 11.374 & 22.350 & 6976.221 \\
8 & Synchronous & 800/800 & 20.593 & 45.379 & 118.640 & 219.112 \\
8 & Prepared & 800/800 & 6.595 & 10.303 & 44.779 & 1860.231 \\
32 & Synchronous & 800/800 & 73.941 & 334.000 & 664.408 & 749.202 \\
32 & Prepared & 800/800 & 6.490 & 9.802 & 22.897 & 2251.117 \\
\bottomrule
\end{tabular}
}
\newcommand{\SemanticResultsTable}{
\begin{tabular}{@{}lrrr@{}}
\toprule
Method & False allow & False not-allow & Errors \\
\midrule
Action Pass evaluator & 0/1360 & 0/100 & 0 \\
Complete fixture policy & 0/1360 & 0/100 & 0 \\
Policy without action binding & 140/1360 & 0/100 & 0 \\
Policy without freshness & 240/1360 & 0/100 & 0 \\
Policy without receipt evidence & 80/1360 & 0/100 & 0 \\
Exact-action equality only & 1200/1360 & 0/100 & 0 \\
Operation/resource membership only & 1300/1360 & 0/100 & 0 \\
\bottomrule
\end{tabular}
}
\newcommand{\FaultResultsTable}{
\begin{tabular}{@{}ll@{}}
\toprule
Experiment & Recorded outcome \\
\midrule
fresh-success & execute; write/readback HTTP 200 \\
payload-mutation & deny; absent (HTTP 404) \\
policy-refresh & review before refresh; execute after \\
expiry & deny; absent (HTTP 404) \\
missing-authority & review; absent (HTTP 404) \\
bad-signature & deny; absent (HTTP 404) \\
same-pass-race & 1/8 admitted \\
distinct-pass-quota-race & 3/8 admitted \\
crash-after-commit & SIGKILL; retry deny; recovery HTTP 200 \\
discarded-ack-retry & Retry HTTP 200; same ETag: true \\
\bottomrule
\end{tabular}
}
\newcommand{\CloudIntervalTable}{
\begin{tabular}{@{}rlrrr@{}}
\toprule
C & Mode & Mean boundary & 95\% lower & Upper \\
\midrule
1 & Synchronous & 16.701 & 16.198 & 17.369 \\
1 & Prepared & 7.853 & 7.724 & 8.020 \\
8 & Synchronous & 24.536 & 23.524 & 25.635 \\
8 & Prepared & 7.846 & 7.610 & 8.044 \\
32 & Synchronous & 114.896 & 102.916 & 133.612 \\
32 & Prepared & 7.936 & 7.805 & 8.068 \\
\bottomrule
\end{tabular}
}

\newcommand{\ReportCaseId}{\path{98996ae5-prepared-8-0-0}}
\newcommand{\ReportCasePassId}{\path{ap_498410044171c1fbf72eb7dc}}
\newcommand{\ReportCasePayloadHash}{\path{cc47ccc0f6ebf40eb2e4b406f7d3aa278a267466e7e6e9074450ec2b345881b6}}
\newcommand{\ReportCaseFingerprint}{\path{157e3da4fb985a81b073d5d4f608f87c5e4ce7c6783b9833221e6c0d00114cc5}}
\newcommand{\ReportCasePassHash}{\path{1cdbada39c5b2882d0de67e22c85ef6ad505512e2cd0727f36b089e03fc58d20}}
\newcommand{\ReportCaseReceiptHash}{\path{df010b04ed923d5987b5f46310392a5a9fee0bf8d5f514aed725f6b2920ee626}}
\newcommand{\ReportCaseETag}{\path{0x8DF1807D61D9533}}
\newcommand{\ReportComparisonTable}{
\begin{tabular}{@{}rrrr@{}}
\toprule
Concurrency & Sync. boundary p95 & Prepared p95 & Reduction \\
\midrule
1 & 25.018 & 11.374 & 54.5\% \\
8 & 45.379 & 10.303 & 77.3\% \\
32 & 334.000 & 9.802 & 97.1\% \\
\bottomrule
\end{tabular}
}
\newcommand{\ReportLifecycleTable}{
\begin{tabular}{@{}rrr@{}}
\toprule
Concurrency & Sync. mean lifecycle & Prepared mean lifecycle \\
\midrule
1 & 53.449 & 4257.126 \\
8 & 130.610 & 1577.035 \\
32 & 483.925 & 2012.902 \\
\bottomrule
\end{tabular}
}
\newcommand{\ReportCaseTimingTable}{
\begin{tabular}{@{}lr@{}}
\toprule
Recorded interval & Milliseconds \\
\midrule
Review and mint & 7.092 \\
Dwell and otherwise unassigned harness time & 1161.897 \\
Worker admission to dispatch & 6.892 \\
Effect request & 103.018 \\
Explicit service verification & 16.314 \\
Complete lifecycle & 1295.214 \\
\bottomrule
\end{tabular}
}

\title{ZeroGate:\\
  Trust-Preserving Fast Paths for Governed AI Agent Runtimes}
\author{Zexun Wang\thanks{Correspondence: \texttt{jw@nd.im}}\\Ond Holdings Inc.}
\date{September 2026\\[0.4em]\small Full Research Report and Engineering Design Study}

\begin{document}
\maketitle

\begin{abstract}
Moving authorization earlier can shorten an agent's dispatch boundary without
removing authorization work. It can also admit an action whose payload,
authority, or relevant state has changed. \zerogate{} separates exact-action
approval from durable local admission: an issuer signs a short-lived
\actionpass{}, and a trusted runtime adapter reconstructs the final action
before a local gate checks its binding and consumes its nonce. A SQLite
transaction couples nonce consumption, applicable quota updates, and an
admission receipt. We state a conditional decision-preservation proposition:
successful local admission implies that a specified synchronous policy would
authorize the same action at the admission point, provided approval is sound,
all policy dependencies are represented and current, observations are
faithful, and consumption is atomic. The implementation alone establishes
neither current-world freshness nor exactly-once remote effects. Evaluation
separates authored semantic fixtures, controlled concurrency and crash
experiments, and an Azure Blob study comparing synchronous and prepared
execution through the same issuer and gate. Both modes mint an exact-action
pass; lifecycle latency includes preparation and prepared-batch dwell.
Across \CloudRecords{} cloud attempts, prepared worker-admission-to-dispatch
p95 ranges from \FastPninetyfiveLow{} to \FastPninetyfiveHigh{}\,ms, versus
\SlowPninetyfiveLow{} to \SlowPninetyfiveHigh{}\,ms synchronously, across the
tested concurrency levels. Prepared mean complete lifecycle is longer at
every level: the boundary improvement is not a net speedup.
The contribution is an explicit revalidation contract, a durable reference
boundary, and an auditable comparison of where authorization cost is paid,
not a new cryptographic primitive or a universal performance frontier.
\end{abstract}

\clearpage
\section*{Executive Summary and Reading Guide}
\addcontentsline{toc}{section}{Executive Summary and Reading Guide}

An agent that has already prepared a well-defined action should not have to
rediscover all of its authorization context at the last possible moment.
However, carrying an earlier approval is useful only if the runtime can tell
whether that approval still applies. A changed recipient, an exhausted budget,
an expired decision, or a different request body must not slip through simply
because the agent can present a familiar identifier. This report examines
how to make that final check small without making its obligations weaker.
The problem becomes important when agents move beyond drafting text into
changing external systems: creating an object, sending a message or changing
a deployment. Permission to use the tool is not approval for every possible
request sent through it.

OSuite is the broader governance platform; \zerogate{} is the admission
design studied here, and an \actionpass{} is the signed object it checks.
Admission means permission to release one request, not confirmation that
the request succeeded. The pass records a short-lived approval for a
concrete action, together with its authority limits and evidence conditions.
The studied pass is exact-action and single-use, not permission to make
arbitrary calls of the same type. An integration component, the adapter,
reconstructs the final outgoing request. The gate checks it and, when
allowing execution, records the decision and marks a unique identifier
(the nonce) as used in one local database transaction.
Preparing a pass earlier changes when work is paid for; it does not remove
the review, the durable write, or the external service's own authorization.

\paragraph{A concrete reason to prepare earlier.}
Consider an illustrative scheduled upload. A team finishes a report before
its release time, so the final file, destination and acting account are
already known. An approval service can review that exact request and sign
its pass while the application waits for release. At release, the gate
checks that the request still matches, the approval is still valid and its
single use remains available. Only then does it permit the upload attempt.
Changing the file or destination requires a new approval; a used or expired
pass cannot be reused. The opportunity is a shorter last step when work can
be completed during an existing wait. If the task needs immediate execution,
preparation may simply add visible delay. This scenario explains the design
opportunity; the measured workload below is a controlled cloud experiment,
not a study of operators using a scheduled-upload product.

\paragraph{Where the inspiration comes from.}
Sony's FeliCa is the contactless card technology; JR East's Suica is a
ticketing system built around such interactions. The useful lesson is not
that agents should imitate a radio protocol, but that a brief local
interaction can sit inside a larger preparation, state-management and
reconciliation process. \zerogate{} applies that separation to approved
actions. Section~\ref{foundation-design-origin} traces the published sources,
including polling, collision handling and interruption behavior, and explains
the separate Shinkansen control-engineering analogy. None supplies a measured
speed or safety guarantee for this software.

\paragraph{What the evidence currently supports.}
In an author-operated Azure deployment, \CloudRecords{} file-like object
writes completed, and separately retrieved bytes matched the approved content
commitments. Table~\ref{tab:paired-tradeoff} compares two timing modes of the
same research implementation, not earlier Studio releases or competitors.
Both paths use the same approval service, gate and storage service.
Synchronous mode requests approval after a dispatch worker receives the
action; prepared mode has already obtained it. The short \emph{boundary}
clock runs from that worker handoff until permission to dispatch is returned.
It excludes the upload itself and waiting to acquire a worker.

Here, p95 is the 95th percentile: at least 95\% of recorded successful
attempts have an interval at or below that value. It is neither an accuracy
percentage nor a response-time guarantee. The longer \emph{lifecycle} clock
starts when preparation begins for that action and ends after service
verification, including the intervening wait and external operation.

\begin{table}[H]
\centering\small
\ReportComparisonTable
\par\vspace{1em}
\ReportLifecycleTable
\caption{Boundary benefit and lifecycle cost from the same cloud records.
All times are milliseconds. Concurrency is the configured worker limit,
not the number of independent agents or arrival rate. The upper panel
compares p95; the lower panel compares means.}
\label{tab:paired-tradeoff}
\end{table}

The prepared batch has a longer mean lifecycle at every level: its preparation
and batch wait are still paid for. These results support a shorter final
boundary, not a faster complete workflow or a measured tap-and-go user
experience. Controlled fault histories make replay,
quota contention and recovery observable. Authored semantic fixtures check
selected authorization obligations, rather than estimating security accuracy
over an independently labeled population.

\paragraph{What this means for a deployment decision.}
The useful question is whether a particular workflow knows its final action
early enough to prepare authorization during otherwise unavoidable time.
For an export whose destination and bytes are already fixed, a pass may let
the last execution step avoid another round trip to the reviewer. For a
conversation whose next tool arguments are not yet known, the same pass
cannot safely authorize an imagined future request. A useful pilot therefore
evaluates preparation usefulness, state freshness, discarded work and
queue-inclusive latency together. The commercial opportunity is to reduce
avoidable last-step waiting while retaining a specific, reviewable link
between approval and execution. Whether that saves time or operating cost
for a customer remains a workflow-specific question.

\paragraph{Terms used throughout.}
An \emph{action} is a concrete operation and its relevant arguments; the
\emph{runtime} is the software environment that carries it out. A
\emph{canonical action} is a structured representation of that request.
Its \emph{fingerprint} is a cryptographic digest used to check whether the
represented details match, not an AI judgment that the action is safe.
An \emph{issuer} reviews and signs the pass; the \emph{gate} checks it at
the release point. An \emph{admission receipt} records that decision,
whereas \emph{effect evidence} records what happened at the destination.
Finally, \emph{freshness} asks whether the facts supporting an approval are
still current; matching two old hashes does not answer that question.

\clearpage
\paragraph{How to read this report.}
The document is a full technical report: its design rationale, mathematical
conditions, measured results and operational implications are intended to
be readable separately, while referring to the same implementation and
evidence. It is not a claim that a particular journal has reviewed the work
or that a production service has achieved the reported laboratory timings.

\begin{table}[H]
\centering\small
\begin{tabularx}{\linewidth}{L{0.21\linewidth}Y}
\toprule
Reader & Suggested route and decision to make \\
\midrule
Engineering leader & Start with the design origin, system composition and
walkthrough (Section~\ref{sec:walkthrough}); then read the results
(Section~\ref{sec:results}). Identify where preparation could overlap real work. \\
Security reviewer & Read the trust model (Section~\ref{sec:contract}),
preservation conditions (Section~\ref{sec:formal}) and limitations
(Section~\ref{sec:limits}). Identify which actor could bypass the boundary. \\
Research reader & Read related work (Section~\ref{sec:related}), the formal
argument and evaluation method (Section~\ref{sec:method}). Distinguish
assumptions, authored expectations and empirical observations. \\
Buyer or operator & Read the walkthrough, scale/deployment discussion and
evidence field guide (Appendix~\ref{app:field-guide}). Decide what must be
demonstrated in a workflow-specific pilot before enforcement. \\
\bottomrule
\end{tabularx}
\caption{Reading routes through one report, not separate claims for different audiences.}
\label{tab:reading-guide}
\end{table}

\paragraph{Evidence conventions.}
\emph{Measured} denotes execution-derived records from the described
configuration. \emph{Controlled} denotes a deliberately constructed input,
fault or comparison, not natural production traffic. \emph{Illustrative}
denotes an example used to explain a mechanism. \emph{Proposed} denotes an
extension whose performance or correctness has not been established by this
study. In particular, the scale-routing diagrams are proposed designs; they
must not be read as results from an implemented fleet scheduler. Transit
systems are cited as design inspiration, not as an endorsement or a transfer
of their reliability guarantees to this software.

\clearpage
\setcounter{tocdepth}{2}
\tableofcontents
\clearpage

\section{Introduction}
\label{sec:introduction}

An agent's permission to use a tool is not necessarily permission to execute
the particular request eventually sent to that tool. A storage write can
change after review; an approved deployment can acquire a different target;
an unused authorization can be copied to another worker. Conversely, repeating
an expensive review after the final request becomes ready may unnecessarily
delay work whose relevant facts have not changed. The design problem is to
separate reusable reasoning from facts that must still be checked at admission,
without quietly replacing current authorization with a cached allow bit.

\zerogate{} studies this separation for a deliberately narrow object: a
short-lived, exact-action, single-use pass. The issuer approves a concrete
canonical action and binds its approval to identity, authority, validity,
freshness evidence, and a consumption budget. At dispatch, a trusted adapter
reconstructs the final action, including the final payload commitment. A local
gate verifies the signed pass, checks the remaining conditions, and records
admission before the trusted caller sends the external request. Preparation is
useful only when the action is known early enough and the required evidence
remains valid. A changed payload requires another approval; unused preparation
is wasted work rather than a free optimization.

The design connects two existing project contracts.
\pcaa{} (Proof-Carrying Agent Actions) describes runtime governance
\citep{pcaa2026}; \cava{} (Canonical Action Verification and Attestation)
describes action representation and binding \citep{cava2026}.
This paper asks how an approval over a canonical action can
be revalidated and durably consumed at a later boundary. It does not reproduce
the whole \pcaa{} review workflow or claim a machine-checked derivation from
arbitrary policy. The current action representation includes a runtime field.
Consequently, a portable \emph{revalidation contract} does not imply identical
fingerprints across runtimes, and our experiment does not establish such
identity.

The name also reflects a limited systems analogy. Shiibashi describes Suica's
autonomous decentralized ticketing architecture, including local operation
and later information exchange, while Sony documents FeliCa as a contactless
IC-card system \citep{shiibashi2008suica,sonyfelica}. Preparing work before a
short interaction and separating that interaction from later reconciliation
is useful inspiration. We infer no undocumented FeliCa cryptographic,
transaction, timing, or revocation mechanism, and transfer no transit
performance figure to agent authorization.

Our claim is intentionally conditional. A locally persisted policy digest can
match a pass while both are stale. A receipt can survive a crash even though
no external effect occurred. Two independent regional nonce databases can
each consume the same copied pass. These are not implementation details that
can be hidden behind a claim of trust preservation; they delimit what the
construction can establish.

The contributions are:
\begin{enumerate}
  \item A precise separation of earlier exact-action approval, current
  revalidation, durable admission, and subsequent external execution, with a
  conditional decision-preservation proposition and explicit counterexamples.
  \item A reference adapter using Ed25519-signed passes and a SQLite
  write-ahead-log transaction for nonce, budget, and receipt consistency,
  together with controlled replay, quota-race, and crash protocols.
  \item A source-pinned semantic study and a real storage study that compare
  identical authorization work placed at different times, report failure
  denominators, and distinguish dispatch-boundary latency from complete
  lifecycle latency.
\end{enumerate}

This is an engineering and research contribution in contract specification,
integration, and measurement. Capabilities, attenuated credentials, local
policy evaluation, and durable duplicate suppression already supply much of
its foundation. The question is whether this particular composition is
correctly bounded and useful for the studied workload, not whether local
authorization or request binding was previously unknown.


\section{Design Origin}
\label{foundation-design-origin}

The design starts with a practical distinction: deciding that an action may
happen and checking that the action now presented is still the approved one
are different jobs. An engineer needs an enforceable interface between them;
an operator needs to understand why work proceeds or stops; a researcher
needs assumptions under which the separation preserves the decision.
\zerogate{} makes that interface explicit rather than treating a faster
permission check as the whole system. The transit comparison explains the
choice of architecture, while the later contract and experiments establish
its precise technical claims.

\subsection{Contributions of the transit sources}

Sony's FeliCa and JR East's Suica occupy different levels of this account.
FeliCa is a contactless IC-card technology; Suica is a deployed ticketing
system with cards, terminals, station servers, and a centre server. Sony's
\emph{The FeliCa System} describes reader/card communication, service-specific
access rights, and anti-tear transactions. In the documented interruption
case, when a card leaves the reader's range, uncommitted data are discarded
to restore the previous state \citep{sonyfelica}. These are concrete design
features of the card system, not evidence about an agent runtime. Their
relevance is the attention paid to both the successful interaction and the
state left behind when that interaction fails.

Sony also documents polling as card discovery: the reader/writer issues a
Polling command, and cards randomly choose among specified time slots to
reduce simultaneous-response collisions
\citep[Secs.~2.3.5 and~4.4.2]{foundation-sony-polling2026}. Its overview
credits time-slot collision detection and avoidance with fewer transaction
steps \citep{sonyfelica}. The design lesson is to distinguish discovery and
contention handling from authorized data access, and to remove avoidable
interaction steps rather than checks. Radio-response collisions are not
replay or resource conflicts in software; those require their own mechanisms
at the action boundary.

Shiibashi's account of Suica supplies the deployment perspective
\citep{shiibashi2008suica}. Gates process cards and retain data without
accessing the centre server for each interaction. Stored data move through
station servers to the centre when communications are available, where
usage information can be consolidated and matched. The account also
describes coexistence with magnetic tickets and the need to preserve basic
operation despite equipment failure. Local operation is therefore one part
of a coordinated system, not a claim that central administration or network
communication has disappeared. This distinction matters equally when an
enterprise places an authorization check near its tools while retaining
central policy ownership and audit responsibilities.

Suica's interaction design offers another useful lesson. Field testing found
that users presented cards differently; the touch-and-go gesture was
introduced to make the interaction more dependable
\citep{shiibashi2008suica}. The corresponding agent-design question is what
must be stable when the runtime presents an action. A predictable submission
boundary, an immutable outgoing payload, and an explicit response to missing
evidence are more useful than an interface that invites repeated guesses.
This is an engineering interpretation of the published account, not a
Japanese design standard or a claim of Sony or JR East endorsement.

A complementary control-engineering influence comes from Shinkansen
ride-control systems. Yagishita's JR East presentation describes car-body tilting for
curves and full active suspension to suppress lateral body motion
\citep[p.~19]{foundation-jreastshinkansen2013}. Yamada's air-spring study
distinguishes sensor-based and map-based tilting control for rolling stock
\citep{foundation-airspringtilt2017}. This motivates adapting support to
observable conditions rather than treating every request identically.
Section~\ref{scale-support} develops the software interpretation in terms of
expiry, justified freshness, contention, and review deadlines. There is no
physical equivalence between train dynamics and agent actions, and no
timing-performance or safety guarantee transfers from these sources.

\begin{figure}[tbp]
\centering
\begin{tikzpicture}[
  x=1cm,y=1cm,
  foundationbox/.style={zg node,
    text width=4.35cm,minimum height=1.25cm,inner sep=6pt,
    align=center,font=\small},
  foundationflow/.style={zg flow},
  foundationmap/.style={zg flow,dashed,draw=zgLine},
  foundationhead/.style={font=\small\bfseries,align=center}]
  \node[foundationhead,anchor=west] at (0,2.5)
    {TRANSIT: published system features};
  \node[font=\small] at (2.45,1.92) {Prepare};
  \node[font=\small] at (7.65,1.92) {Local interaction};
  \node[font=\small] at (12.85,1.92) {Later coordination};
  \node[foundationbox,fill=zgMint] (foundationcard) at (2.45,0.98)
    {Card issuance and\\service-specific data};
  \node[foundationbox,fill=zgMint] (foundationtransit) at (7.65,0.98)
    {Reader/card exchange;\\local ticket processing};
  \node[foundationbox] (foundationcentre) at (12.85,0.98)
    {Station and centre\\data transfer / matching};
  \draw[foundationflow] (foundationcard) -- (foundationtransit);
  \draw[foundationflow] (foundationtransit) -- (foundationcentre);
  \node[foundationbox,fill=zgBlue]
    (foundationpass) at (2.45,-1.1)
    {Review exact action;\\mint signed \actionpass{}};
  \node[foundationbox,fill=zgCream]
    (foundationgate) at (7.65,-1.1)
    {Revalidate; durably record\\nonce, quota, and receipt};
  \node[foundationbox,fill=zgMint]
    (foundationeffect) at (12.85,-1.1)
    {Attempt external effect;\\verify and reconcile};
  \draw[foundationmap] (foundationcard) -- (foundationpass);
  \draw[foundationmap] (foundationtransit) -- (foundationgate);
  \draw[foundationmap] (foundationcentre) -- (foundationeffect);
  \draw[foundationflow] (foundationpass) -- (foundationgate);
  \draw[foundationflow] (foundationgate) -- (foundationeffect);
  \node[foundationhead,anchor=west,text=zgInk] at (0,-2.08)
    {AGENT TRANSLATION: distinct approval, admission, and effect};
  \node[font=\small,align=center,text width=15.3cm] at (7.65,-2.82)
    {Failure stays explicit: incomplete evidence means no dispatch;\\
     a committed admission does not prove an external effect.};
\end{tikzpicture}
\caption{Transit-to-agent design translation. Dashed links indicate analogy,
not protocol equivalence. The upper row combines documented FeliCa and Suica
features; the lower row describes the reference authorization lifecycle.
No time scale or transferable safety guarantee is implied.}
\label{foundation-transit-translation}
\end{figure}

\subsection{From inspiration to engineering obligations}

Figure~\ref{foundation-transit-translation} translates these observations
into three obligations. First, prepare work whose inputs are already known.
Policy review and signing can precede readiness when the final action is
available, but payload-dependent approval cannot precede knowledge of that
payload. The tested pass is a one-use approval for an exact action, not a
transit balance or a reusable permission to perform a class of operations.
If preparation becomes invalid or the action is abandoned, its cost remains.
This motivates reporting preparation and complete lifecycle alongside the
shorter admission boundary.

Second, make the boundary small without making its evidence optional.
The gate must compare the observed action with the signed commitment, check
the supported authority and freshness bindings, and establish that the
nonce and applicable budget can be consumed. Locality does not justify
omitting a dependency. A resource fact that only a remote system knows may
require a fresh query or a separately justified consistency protocol.
Matching locally stored digests establishes agreement with that snapshot,
not automatic knowledge of the current external world. Where the required
facts cannot be established, the useful response is fresh review or denial,
not an optimistic interpretation of a cached approval.

Third, define interruption at the actual transaction boundary. FeliCa's
documented anti-tear behavior motivates asking which state changes are
indivisible, but does not supply the answer for distributed software.
Here, one local database transaction couples nonce consumption, applicable
quota usage, and the admission receipt. External execution follows that
transaction and is not rolled back with it. A crash can therefore leave a
valid admission record and no external effect. Recovery needs the retained
request and destination-specific retry semantics, rather than a second
consumption of the same pass. The analogy is useful precisely because it
forces this boundary to be named instead of suggesting that all failures
have become atomic.

For an operator, the resulting experience should be understandable rather
than merely quick: preparation identifies the exact proposed work, admission
explains whether its evidence still suffices, and outcome records distinguish
authorization from completion. A changed target should create a new decision,
not a silently expanded approval. Neither Sony's transaction timings nor
Suica's service volumes are used as agent performance targets. The value
borrowed from transit is the separation of responsibilities; its usefulness
for agents must be demonstrated on agent workloads.

\section{System Composition}
\label{foundation-system-composition}

The roles can be read without adopting the project vocabulary first:
\cava{} describes the concrete action, \pcaa{} frames who may authorize it,
and \actionpass{} carries the signed approval. \zerogate{} is the software
gate that checks the approval against the outgoing request and records
one-time consumption before permitting dispatch. This is an interface map;
it does not imply that every component was exercised together in the study.

\zerogate{} fits between a decision about authority and an attempt to cause
an effect. Its relationship to other OSuite research components is best
expressed through interfaces, not a claim that their entire pipelines were
executed together. \cava{} addresses action representation and attestation
\citep{cava2026}; \pcaa{} frames deployer-owned authority and the surrounding
governance process \citep{pcaa2026}. Bounded Action Firewall (BAF) contributes
the architectural vocabulary of constrained execution, while Agent Runtime
Exposure Graph (AREG) describes how action, authority, destination, and
evidence records can be joined \citep{foundation-baf2026,foundation-areg2026}.
Figure~\ref{foundation-composition} separates these relationships from the
reference path measured in this report.

\begin{figure}[tbp]
\centering
\begin{tikzpicture}[
  x=1cm,y=1cm,
  foundationbox/.style={zg node,
    text width=4.3cm,minimum height=1.55cm,inner sep=6pt,
    align=center,font=\small},
  foundationinterface/.style={foundationbox,dashed,fill=zgBlue},
  foundationflow/.style={zg flow},
  foundationlink/.style={foundationflow,dashed,draw=zgLine},
  foundationlabel/.style={font=\small,align=center,fill=white,inner sep=2pt}]
  \node[foundationinterface] (foundationskill) at (2.45,2.9)
    {\textbf{Neutral skill boundary}\\Requested scope;\\never a grant};
  \node[foundationinterface] (foundationpcaa) at (7.65,2.9)
    {\textbf{\pcaa{} authority}\\Decision owner;\\review obligations};
  \node[foundationinterface] (foundationbaf) at (12.85,2.9)
    {\textbf{BAF execution bounds}\\Scope, expiry,\\reuse constraints};
  \node[foundationbox,fill=zgBlue] (foundationcava) at (2.45,0.55)
    {\textbf{Adapter + \cava{}}\\Concrete action\\and fingerprint};
  \node[foundationbox,fill=zgBlue] (foundationissuer) at (7.65,0.55)
    {\textbf{Reference issuer}\\Fixture policy;\\signed exact pass};
  \node[foundationbox,fill=zgMint] (foundationdispatch) at (12.85,0.55)
    {\textbf{Trusted dispatch}\\External effect;\\then verification};
  \node[foundationbox,fill=zgBlue]
    (foundationobserve) at (2.45,-1.9)
    {\textbf{Final observation}\\Rebuild the\\outgoing action};
  \node[foundationbox,fill=zgCream]
    (foundationconsume) at (7.65,-1.9)
    {\textbf{\zerogate{} + local store}\\Revalidate, consume;\\persist receipt};
  \node[foundationinterface,fill=zgGray] (foundationareg) at (12.85,-1.9)
    {\textbf{AREG exposure}\\Coverage, decisions,\\and effect links};
  \draw[foundationlink] (foundationskill) -- (foundationpcaa);
  \draw[foundationlink] (foundationpcaa) -- (foundationissuer);
  \draw[foundationlink] (foundationbaf.south) -- ++(0,-0.35)
    -| (foundationissuer.north east);
  \draw[foundationflow] (foundationcava) -- (foundationissuer);
  \draw[foundationflow] (foundationcava) -- (foundationobserve);
  \draw[foundationflow] (foundationissuer) --
    node[foundationlabel,right=2pt] {signed pass} (foundationconsume);
  \draw[foundationflow] (foundationobserve) -- (foundationconsume);
  \draw[foundationflow] (foundationconsume.north east) -- (foundationdispatch.south west);
  \draw[foundationlink] (foundationconsume) -- (foundationareg);
  \draw[foundationlink] (foundationdispatch) -- (foundationareg);
  \node[font=\small,align=center,text width=15.3cm] at (7.65,-3.55)
    {Solid: reference responsibilities and data path.
     Dashed: architectural interfaces.\\
     Review or deny releases no dispatch; the database transaction ends locally.};
\end{tikzpicture}
\caption{Composition without conflating interface design and integration
evidence. Boxes are logical responsibilities, not separate hosts. BAF and
AREG are not tested integrations here, and the fixture issuer does not
execute the complete \pcaa{} pipeline.}
\label{foundation-composition}
\end{figure}

\subsection{Requested, granted, and residual authority}
\label{foundation-authority-layers}

A vendor-neutral skill boundary describes what work a skill asks to perform:
operations, resources, destinations, relevant systems, and constraints. It
does not confer authority merely by being well formed or bundled with the
skill. The runtime authorization layer evaluates the request under deployer
policy and the task's identity and context; it may narrow or deny the request.
\zerogate{} then evaluates the resulting grant; only a successfully committed
execute admission consumes it. These are interface
responsibilities, not a claim that collaborators have frozen a common
neutral-skill schema.

Let $R$ be the set of concrete actions described by the requested scope,
$G$ the set granted by the authorizer, and $E_t$ the residual set eligible
under the gate's supported evidence and consumption state at admission time
$t$. The intended relationship is
\begin{equation}
  E_t \subseteq G \subseteq R.
  \label{foundation-narrowing}
\end{equation}
The inclusion $G \subseteq R$ is an upstream authorizer obligation. The
reference gate checks the observed action against the signed grant, not
against the original skill request; \texttt{authority.requested\_ref} is
\texttt{null} in the cloud passes.
An empty grant denies the request; a narrower grant excludes actions that
the skill requested but policy did not allow. Resource and destination sets
narrow by removing members. Numeric limits narrow by tightening intervals:
a lower maximum or higher minimum excludes possibilities. Equality
constraints, such as an exact state version, require equality rather than
numerical proximity. The current evaluator distinguishes numeric values from
numeric strings and treats missing required evidence as insufficient, not
as an unrestricted scope.

For the tested pass $p$, exact-action binding imposes the additional
restriction $E_t \subseteq G \cap \{a_p\}$, where $a_p$ is the exact action
bound into $p$. After successful consumption, that pass has no residual eligible
action. A later action can lie inside the requested business scope and still
require a new pass. In particular, changing a payload to something apparently
less risky does not preserve its fingerprint. These sets describe the
intended narrowing contract; the gate's eligibility remains relative to its
evidence, not a guarantee against unobserved external change. Prospective
bounded-class approval would need an explicit membership rule, delegation
policy, and shared accounting. It is not established by this one-use design.

\subsection{Concrete interface contracts}

A practical integration should keep four records distinguishable: the
skill's request, the issuer's grant, the gate's admission receipt, and the
destination's effect evidence. Their references connect the lifecycle,
but one record cannot substitute for another. The request reference
explains why authority was sought; the pass identifies what was approved;
the receipt records the decision and, for execute admissions, consumption;
and the effect record describes what happened externally. Version identifiers and documented field types
make these handoffs inspectable. An adapter must also specify how it
obtains observations and prevents bypass, since an accurate record written
after an uncontrolled effect is not pre-execution enforcement. These
obligations are useful acceptance criteria even before every component
shares a transport or storage implementation.

\paragraph{Representation: \cava{}.}
The adapter supplies a concrete action object; normalization produces the
versioned representation used for hashing. Relevant fields include runtime,
operation, target, systems touched, and metadata carrying payload commitments.
At admission, the trusted adapter reconstructs that object from the outgoing
request rather than echoing the approved fingerprint. Normalization is not a
semantic oracle: it cannot discover every hidden shell expansion, omitted
side effect, or misleading adapter assertion. Unsupported interpretation
must remain unsupported. The present representation includes runtime and
other adapter-related fields, so consistent serialization within one
representation does not establish equal hashes across runtimes. Mapping
quality and authorization correctness remain separate questions.

\paragraph{Authority: \pcaa{} and the issuer.}
The conceptual upstream output identifies the decision owner, granted scope,
review outcome, and evidence obligations. A permitted concrete action can
then be bound into a signed pass with holder/session identifiers, validity,
policy/state references, nonce, and budget. \pcaa{} explains where final
authority belongs; it does not make a skill's request authoritative. In the
experiment, a deterministic fixture issuer performs the review and signing,
not the complete \pcaa{} route--review--proof workflow. The issuer signature
authenticates its statement, not a machine-checked derivation of policy
correctness. Holder equality likewise depends on trusted adapter observations
rather than independently authenticating the presenter.

\paragraph{Execution bounds: BAF and \zerogate{}.}
BAF's architectural concerns translate into explicit scope, expiry, reuse,
and evidence conditions, rather than a second implicit permission layer.
\zerogate{} evaluates one exact grant, returning execute, review, or deny.
A committed execute admission consumes the nonce and applicable quota.
In the durable adapter, execute is released only after the local consumption
transaction commits; time validity is checked using the post-lock admission
sample, not promised at eventual external execution. A composition should
assign one owner to nonce and quota accounting rather than maintain
independent stores that can each admit a copied pass. These contracts explain
compatibility with BAF, not a measured BAF integration.

\paragraph{Exposure and evidence: AREG.}
An exposure interface can join runtime/adapter identity, action fingerprint,
authority reference, destination, admission decision, receipt, and later
effect evidence. It should also declare whether an adapter enforces before
execution, relies on an interruptible hook, or only observes. An observe-only
record can support investigation but cannot establish that a denied action
was prevented. AREG provides an architectural destination for such records;
the reported experiments do not test a live AREG ingestion path. In
particular, an admission receipt and an effect acknowledgement must remain
distinct facts even when a dashboard displays them together.

\subsection{Illustrative cross-runtime translations}

Consider a skill requesting a write of one report to an internal object
store. A tool-call adapter might receive a structured operation and object
identifier; an SDK adapter might observe a client method and byte buffer.
Both should resolve the actual destination and bind the final bytes and
length before requesting approval. This is a contract illustration, not
another cloud experiment. Under the current runtime-inclusive representation,
the two projections need not share a fingerprint or pass. Each must obtain
approval for its own concrete representation. A changed object name or body
requires a new exact-action decision even if both requests still look like
``write the report.''

For a second illustrative case, a refund skill requests at most 100 units
for a specified order; policy grants at most 60. A browser adapter and an
API adapter would each need to establish the order, amount, currency, and
destination, not merely recognize a refund button or endpoint. Preparing an
exact pass for 40 does not authorize a later request for 30 under that same
pass: the authority bound may be respected while the exact-action binding
has changed. Missing currency or an unobserved browser submission prevents
establishing the necessary contract. Neither example is an empirical result
or a claim that these adapters already provide complete mediation.

The composition is valuable because it localizes responsibility. Skill
authors state needs, authorizers decide scope, adapters expose concrete
requests, the gate records consumption, and evidence systems connect that
decision to what followed. This gives engineers contracts to implement,
researchers assumptions to challenge, and commercial readers a way to ask
which boundary a proposed deployment actually controls.

\section{Relationship to Prior Work}
\label{sec:related}

\paragraph{Capabilities and contextual attenuation.}
An action pass is capability-like: it conveys restricted authority that a
verifier can check without repeating the issuer's reasoning. Complete
mediation still requires checking authority for the actual operation;
Saltzer and Schroeder explicitly warn that cached access decisions need a
reliable account of changes \citep{rwSaltzer1975}. Macaroons provide
decentralized delegation through contextual caveats, including third-party
discharges \citep{rwMacaroons2014}. Biscuit combines signed authorization
tokens with attenuating logic, and its documentation explicitly describes
per-request attenuation and body-hash binding
\citep{rwBiscuitIntro,rwBiscuitRequest}. Exact-request restriction is therefore
not itself novel. \zerogate{} selects a narrower operating point: a trusted
issuer mints a concrete single-use approval, the adapter reconstructs the
request, and a local transaction couples consumption with an admission
receipt. Unlike a broadly delegated reusable credential, this construction
pays minting and persistence costs for each distinct approved action. It does
not implement a general decentralized attenuation language.

\paragraph{Proof-carrying authorization and local policy engines.}
Appel and Felten's \emph{Proof-Carrying Authentication} provides a foundational
account of presenting evidence that a small checker can validate
\citep{rwAppel1999}. Our signature authenticates the issuer's statement; it is
not a proof that arbitrary issuer code correctly implemented policy. The
proposition below makes issuer soundness and dependency completeness explicit
assumptions. Nor is avoiding a remote policy call unique to passes. OPA
documents indexing, partial evaluation, and WebAssembly execution
\citep{rwOPAPerformance,rwOPAWasm}; Cedar is designed for expressive, fast,
safe, and analyzable authorization \citep{rwCedar2024}. An embedded engine
with locally available facts can evaluate at the boundary without contacting
a remote reviewer. The relevant comparison is among locations and
representations of policy work, evidence acquisition, and state coordination,
not a presumption that policy engines require remote execution.
Our embedded fixture baseline is not an OPA or Cedar benchmark and cannot
establish superiority over either.

\paragraph{Leases, revocation, and freshness.}
A pass expiry resembles a bound on delegated authority, but is not by itself
the cache-consistency protocol of Gray and Cheriton's leases
\citep{rwLeases1989}. A lease involves a promise about which changes may occur
while cached information is used. A timestamp does not create that promise.
Biscuit's revocation identifiers likewise require a verifier to learn which
identifiers have been revoked \citep{rwBiscuitRevocation}. For \zerogate{},
an epoch comparison is meaningful only relative to the source of that epoch.
An isolated verifier cannot distinguish two otherwise identical histories
that differ only by an unseen remote revocation. It must accept possible
staleness, reject while disconnected, or rely on a separately justified lease
or communication bound. Offline immediate revocation is not a claim here.

\paragraph{Durability, duplicate suppression, and external effects.}
Linearizability locates an operation at a point between invocation and
response \citep{rwHerlihy1990}; we use that abstraction for admission to one
consumption database, not for the entire cloud workflow. RIFL couples
duplicate detection and completion information to recoverable operations
\citep{rwRIFL2015}. Helland distinguishes local transactions from interactions
between independently managed entities \citep{rwHelland2007}. Transactional
outboxes address a corresponding dual-write problem by persisting delivery
intent with local state, but normally still require retry and receiver-side
idempotence \citep{rwAWSOutbox}. Stripe documents a concrete idempotency-key
contract with parameter checks and retention rules, rather than an
unqualified eternal exactly-once promise \citep{rwStripeIdempotency}.
Our receipt is an admission record, not automatically a durable delivery
queue. Its atomicity with nonce consumption prevents a second local
admission; it cannot atomically commit an Azure operation.

Taken together, these precedents support a defensible positioning:
\zerogate{} specializes established authorization and durability techniques
around a canonical agent-action boundary. Its distinctive object of study is
the explicit link between an earlier exact-action decision, the dependencies
that justify preserving it, a single local consumption domain, and measured
dispatch versus lifecycle cost. The scientific value is in exposing those
conditions and testing the composition. We make neither a first-ever claim
nor a claim that this composition dominates existing authorization systems.

\section{Contract and Trust Model}
\label{sec:contract}

\subsection{Objects, actors, and phases}

The \emph{issuer} evaluates a policy over a proposed final action. The
\emph{adapter} obtains that pass and observes the actual action at the
boundary. The \emph{gate} checks the pass and updates a local consumption
store. The \emph{effect service} accepts the dispatched request and talks to
the storage system. In the reference study, the adapter and gate run in the
trusted client, while the review and effect endpoints share a service.
Figure~\ref{fig:architecture} describes logical responsibilities, not
independent failure domains or a requirement that every box has its own host.

\begin{figure}[tbp]
\centering
\begin{tikzpicture}[
  box/.style={zg node,align=center,font=\small,
    text width=3.55cm,minimum height=1.05cm,inner sep=5pt},
  flow/.style={zg flow},
  lab/.style={font=\scriptsize,align=center,fill=white,inner sep=2pt}]
  \node[box,fill=zgBlue] (issuer) at (0,0) {Policy review / issuer\\Exact action; signed pass};
  \node[box,fill=zgCream] (gate) at (5.1,0) {Adapter + gate\\Observe final body; admit};
  \node[box,fill=zgMint] (service) at (10.2,0) {Effect service\\Trusted bearer API};
  \node[box] (evidence) at (0,-2.5) {Artifact / later analysis\\Receipts and raw records};
  \node[box,fill=zgCream] (db) at (5.1,-2.5) {Local SQLite WAL, FULL\\Nonce + quota + receipt};
  \node[box,fill=zgMint] (blob) at (10.2,-2.5) {Azure Blob Storage\\Conditional PUT; GET};
  \draw[flow] (issuer) -- node[lab,above=3pt] {pass} (gate);
  \draw[flow] (gate) -- node[lab,above=3pt] {send} (service);
  \draw[flow] (gate) -- node[lab,right=3pt] {one transaction} (db);
  \draw[flow] (service) -- node[lab,right=3pt] {external effect} (blob);
  \draw[flow] (db) -- node[lab,above=3pt] {export} (evidence);
\end{tikzpicture}
\caption{Logical architecture. The external request is outside the admission
transaction. The effect service checks a caller-supplied receipt binding,
not an independently authenticated proof of gate persistence.}
\label{fig:architecture}
\end{figure}

A pass contains the canonical action and its fingerprint; issuer and key
reference; pass identifier and nonce; holder, workspace, and session
bindings; permitted operations, resources, destinations, systems, and
constraints; validity bounds; policy and state digests and revocation epoch;
and budget and receipt requirements. The general core also represents
additional predicates and governance conditions. The durable research adapter
supports a restricted subset rather than pretending to acquire arbitrary
external evidence.

For the cloud workload, the action is a particular Azure Blob write with a
specific object identity, payload SHA-256 commitment, and byte length.
Each pass has a unit budget and is not reusable across different writes.
The service's deterministic fixture policy checks its fixed holder and
workspace, allowed destination and operation, and payload bounds. This is an
executable research issuer, not a human review process or an implementation
of every policy envisioned by \pcaa{}.

There are three distinct times. At \emph{preparation}, the issuer approves and
signs. At \emph{admission}, the gate revalidates and durably consumes. At
\emph{execution}, the service attempts the external effect. A success at one
time does not assert success at the next. The core uses
\Execute{}, \Review{}, and \Deny{} outcomes: only \Execute{} permits dispatch;
\Review{} requests fresh evidence or another decision; \Deny{} rejects the
submitted attempt. The binary semantic experiment tests dispatch permission,
not whether every non-executing case was routed to the correct non-executing
lane.

\subsection{Observation and identity boundaries}

The final observation must be derived from what will actually be sent, not
from the approved fingerprint echoed by the caller. The cloud client
recomputes the payload hash and byte length from the final body, reconstructs
the canonical action, and obtains a new fingerprint. The adapter must then
keep that action and payload immutable through dispatch. Neither a signature
nor a canonical representation closes a later time-of-check/time-of-use gap
if another component can replace the outgoing bytes.

Holder and session fields are assertions supplied by the trusted adapter.
Their equality checks do not authenticate the presenter. The service uses
a shared bearer credential for privileged endpoints; the issuer's Ed25519
signature authenticates the pass issuer, not possession of a holder's private
key. In particular, this is not the sender-constrained proof-of-possession
mechanism specified by DPoP \citep{rwDPoP2023}. HTTPS and an operator-trusted
endpoint or CA are required when deploying the client across an untrusted
network. Fetching a public key from an unauthenticated or substituted metadata
endpoint would not establish issuer trust.

The effect service checks that the supplied receipt says \Execute{} and
matches the submitted pass hash and action fingerprint. It does not obtain
the local gate database, independently attest its transaction, or require a
gate-signed receipt. Its audit explicitly labels this evidence
\texttt{caller\_supplied\_binding\_only}. A malicious bearer-credential holder
could synthesize those fields. Thus the API is a trusted-runner research
boundary, not a secure public admission endpoint for arbitrary untrusted
actors. Complete mediation in this architecture is an obligation of the
trusted runner and its deployment, not a property proved by receipt syntax.

\subsection{Freshness and consumption domains}

The gate owns a persisted snapshot containing a policy digest, a state digest,
and a revocation epoch. It reads this snapshot inside the admission
transaction, rather than accepting replacements in each caller observation.
This prevents a caller from simply asserting the desired epoch. It does not
make the database current. The service and gate snapshots can diverge, and
there is no implemented automatic coherence or revocation distribution
protocol between them.

Moreover, the service's initial state digest hashes backend configuration,
not the contents or current authorization state of Azure. A successful
comparison therefore attests consistency with that configured snapshot,
not a fresh observation of arbitrary resources. Policies requiring other
state predicates or a separately bound revocation digest are not supported
by pretending that caller-supplied values are authoritative: the adapter
does not import those values as fresh evidence.

The \emph{consumption domain} $\Omega$ is one persistent gate database with
coordinated access to its nonce and quota tables. Processes using that same
database can contend on one admission record. Separate databases, including
independently initialized regional copies, constitute separate domains.
SQLite WAL itself requires cooperating processes on the same host and allows
one writer at a time \citep{sqliteWAL}. No global replay property follows from
the current same-host implementation.

\section{Conditional Decision Preservation}
\label{sec:formal}

This section states a specification obligation, not an unconditional theorem
about an arbitrary signed object. In particular, the synchronous decision is
defined independently of the fast gate; defining the oracle to be the gate
would make agreement circular.

In ordinary terms, the claim is that an earlier approval may still justify
release if the relevant action and facts have not changed and the remaining
checks succeed. The conditions below state who must make that true. The
claim runs in one direction: a successful gate admission must be justified;
it does not require the gate to admit every action a fresh reviewer might
allow. A valid signature alone cannot supply these conditions.

\begin{definition}[Decision and dependency projection]
Let $a$ be the immutable action to be dispatched, including its final payload
commitment. Let $C_v(a)$ be its canonical representation under version $v$ and
\[
  h_v(a)=\operatorname{SHA256}(\SJSON(C_v(a))).
\]
$\SJSON$ denotes the implementation's recursively sorted-key JSON
serialization, not a claim of RFC~8785/JCS conformance. Let $w$ be a coherent
authoritative world state, including policy and evaluator versions, identity,
authority, resource facts, and any required evidence or revocation state.
Let $s$ be local consumption state and $\kappa$ a logical admission identity.
An independently specified synchronous decision is
\[
 D_{\rm sync}(a,\kappa,w,s)\in\{\Execute,\Review,\Deny\}.
\]
For an eligible policy fragment with dependency description $d$, require
predicates $K,V$ and a complete projection $\pi_d$ such that
\begin{equation}
 \begin{split}
 &D_{\rm sync}(a,\kappa,w,s)=\Execute\\
 &\qquad\Longleftrightarrow\
 K(C_v(a),\pi_d(w))\ \land\ V(a,\kappa,w,s).
 \end{split}
 \label{eq:factorization}
\end{equation}
$K$ captures the approval fact established at minting; $V$ captures remaining
admission conditions, including current identity constraints, validity,
unusedness, available budget, and receipt obligations.
\end{definition}

Factorization~\eqref{eq:factorization} must be justified for the actual policy.
It is not established merely by putting a policy digest into a pass.
A dependency can be encoded by a value, version, or commitment only if its
interpretation and update discipline make the required equality meaningful.
Relevant changes not represented in $\pi_d$ invalidate the proposed
factorization. Negative decisions need not admit the same decomposition.

Consider a pass $P$ minted over $a_m$ and dependency value $c_m$. Let
$\operatorname{Admit}_{\Omega}(P,a,\kappa,\lambda)$ denote a successful durable
admission assessed at time $\lambda$ in $\Omega$, and let
$s_{\lambda}^{-}$ be the state before its consumption. In the reference gate,
$\lambda$ is the post-lock validity-check sample within the transaction;
it is not the later commit-visibility point or return time. Consumption is
serialized by the transaction, while the authorization claim is explicitly
about this assessment instant. The admission key in the
implementation is the tuple
\[
  k(P)=(\text{issuer id},\text{workspace id},\text{nonce}).
\]
Pass-identifier usage and pass-hash consistency are checked separately.
The identifier $\kappa$ aligns the same logical operation in the specification
and implementation; it is not an additional proof of authority.

\paragraph{Assumptions.}
The proposed guarantee requires all of the following.
\begin{enumerate}[label=\textbf{A\arabic*.}]
  \item \textbf{Sound approval and authentic binding.}
  The trusted issuer signs only after establishing
  $K(C_v(a_m),c_m)$. The signed hash binds all fields used by the gate;
  verification, key selection, serialization, and collision resistance are
  sound. A valid signature alone does not establish the issuer's policy
  correctness.
  \item \textbf{Complete dependencies.}
  Equation~\eqref{eq:factorization} holds for the admitted policy fragment,
  including all facts whose change could invalidate the earlier approval.
  The verifier and canonicalization versions have agreed meanings.
  \item \textbf{Faithful observations and mediation.}
  The trusted adapter supplies authentic identity and session context,
  reconstructs the final action, and dispatches exactly that action only
  after successful admission. Equal checked representations denote the
  same policy-relevant operation in this adapter.
  \item \textbf{Current coherent evidence.}
  At $\lambda$, the checked dependency value is authoritative and satisfies
  $\pi_d(w_\lambda)=c_m$. This requires an independently justified freshness
  mechanism or stability promise; an unchanged stale local row is
  insufficient.
  \item \textbf{Sound residual checks.}
  Acceptance of the gate's residual checks implies
  $V(a,\kappa,w_\lambda,s_\lambda^{-})$. Clocks and observation sources are
  trusted, missing evidence does not produce \Execute{}, and applicability
  limits are enforced.
  \item \textbf{Atomic durable admission.}
  In $\Omega$, checking unusedness and budget, consuming the key and
  applicable quota, and recording the bound receipt form one linearizable
  durable transaction. Successful admission is released only after commit.
  Recovery preserves committed state; the host does not roll back or edit
  that state outside the protocol.
\end{enumerate}

\begin{proposition}[One-sided conditional preservation]
\label{prop:preservation}
Under A1--A6, for an eligible pass and action,
\[
 \operatorname{Admit}_{\Omega}(P,a,\kappa,\lambda)
 \ \Longrightarrow\
 D_{\rm sync}(a,\kappa,w_\lambda,s_\lambda^{-})=\Execute.
\]
Within $\Omega$, there is at most one successful admission for $k(P)$, and
that admission has a durably bound receipt before trusted dispatch.
\end{proposition}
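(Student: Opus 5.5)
The plan is to reduce the implication to the factorization~\eqref{eq:factorization}. Fix a successful durable admission $\operatorname{Admit}_{\Omega}(P,a,\kappa,\lambda)$ and prove the two conjuncts on the right of~\eqref{eq:factorization} separately. Then A2 gives $D_{\rm sync}(a,\kappa,w_\lambda,s_\lambda^{-})=\Execute$. The second sentence of the proposition, on uniqueness and receipt-before-dispatch, will be argued separately from A6 and A3.

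\textbf{Step 1: the approval fact $K$.} Success requires the gate to have verified the issuer signature over $P$. It also requires that the fingerprint it recomputed from the adapter's reconstruction of $a$ equals the fingerprint bound in $P$.
\begin{itemize}
\item By A1 (authentic binding, sound serialization, collision resistance), equality of $h_v(a)$ with the signed hash gives $\SJSON(C_v(a))=\SJSON(C_v(a_m))$, except with negligible probability.
\item Injectivity of sorted-key serialization on well-formed inputs then gives $C_v(a)=C_v(a_m)$.
\item By A3, the checked representation is the operation actually dispatched, so $a$ is the released action rather than an echoed fingerprint.
\item A1 also says the issuer signed only after establishing $K(C_v(a_m),c_m)$, so we obtain $K(C_v(a),c_m)$.
\item A4 supplies $\pi_d(w_\lambda)=c_m$ at the assessment instant. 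Substituting gives $K(C_v(a),\pi_d(w_\lambda))$.
\end{itemize}
A2 is what guarantees that no dependency outside $\pi_d$ could have changed the truth of $K$.

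\textbf{Step 2: the residual conditions $V$.} Success also means the gate accepted its residual checks at $\lambda$, reading unusedness and budget from the pre-consumption state. This state is well defined as $s_\lambda^{-}$ because A6 makes the read and the consumption part of one linearizable transaction. The residual checks cover identity and session equality, the validity window sampled post-lock, unusedness of $k(P)$, available quota, and receipt requirements. A5 gives $V(a,\kappa,w_\lambda,s_\lambda^{-})$ directly. Combining Steps 1 and 2 through the ``$\Leftarrow$'' direction of~\eqref{eq:factorization} yields the implication. The argument is deliberately one-sided: nothing is claimed when the gate returns \Review{} or \Deny{}.

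\textbf{Step 3: uniqueness and receipt ordering.} I would argue by linearization.
\begin{itemize}
\item By A6, all admission transactions on $\Omega$ touching $k(P)$ have a total order of linearization points.
\item Suppose two admissions for $k(P)$ both succeed, and take the earlier one in that order. It commits consumption of $k(P)$, and A6's recovery clause ensures that committed state persists even across crashes.
\item The later transaction therefore reads $k(P)$ as used in its own $s^{-}$. Its unusedness check fails, contradicting its success.
\item Receipt-before-dispatch follows because the receipt is written in the same transaction as consumption. A6 releases \Execute{} only after commit, and A3 forbids the trusted adapter from dispatching without that release.
\end{itemize}

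\textbf{Main obstacle.} I expect the hard step to be Step 1's passage from hash equality to equality of the policy-relevant action. This needs A1 and A3 to be used carefully, including the caveat that the representation contains a runtime field and is not JCS. I would state explicitly that the conclusion holds up to the collision-resistance assumption. I would also state that "same policy-relevant operation" is delegated to A3 rather than derived.
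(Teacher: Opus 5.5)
Your proposal is correct and follows the paper's own proof essentially step for step. You obtain $K(C_v(a),\pi_d(w_\lambda))$ from A1 (sound issuance and authentic binding), A3 (faithful reconstruction) and A4 (current dependency equality), obtain $V(a,\kappa,w_\lambda,s_\lambda^{-})$ from A5, close with the A2 factorization, and get uniqueness and receipt-before-dispatch from A6's serialization and commit-before-release, with A3 forbidding dispatch without release. Your additions only make explicit what the paper leaves implicit: the collision-resistance caveat, and A6's recovery clause for crash persistence. One small imprecision: the signature covers the pass hash, which in turn binds the fingerprint field, so this is two hash bindings rather than a single comparison against ``the signed hash.''
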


\begin{proof}
Authentic binding and sound issuance establish $K(C_v(a_m),c_m)$.
The action check and faithful adapter identify the policy-relevant action
with $a_m$, and current dependency equality transfers this approval fact to
$K(C_v(a),\pi_d(w_\lambda))$. Sound residual checking establishes
$V(a,\kappa,w_\lambda,s_\lambda^{-})$. The independently justified
factorization then gives the claimed synchronous \Execute{} decision.
Atomic serialization orders competing admissions: after the first commit,
the key is consumed, so no later transaction can satisfy unusedness.
The same transaction persists the corresponding receipt, and commit precedes
release to the trusted dispatcher. These last properties concern admission,
not the outcome of the external call.
\end{proof}

\paragraph{Scope of agreement.}
This is a safety implication, not a converse. A synchronous policy may allow
an action whose pass is absent, expired, unsupported, or unavailable because
the local store failed. The gate may conservatively refuse it. Nor does the
proposition identify \Review{} with \Deny{} or prove correct escalation.
On a subset where preparation is sound and all remaining requirements hold,
both paths authorize; outside it, no universal three-way decision equivalence
is claimed.

The comparison uses $s_\lambda^{-}$, not state after consumption. Asking the
synchronous policy whether the nonce is still unused after the fast path
consumed it would compare different operations. Likewise, a quota reservation
must be attributable to this admission rather than interpreted as newly
available capacity for a second action.

\paragraph{Snapshot-relative implementation claim.}
The durable adapter can enforce consistency with its local snapshot and
consumption database. It does not supply A4 for arbitrary current-world
facts. Its unconditional implementation-level claim must therefore be
weaker: admission satisfies the implemented checks relative to the persisted
snapshot, subject to the trusted adapter and storage assumptions. Lifting
that result to current synchronous authorization requires the missing
freshness premise. Our semantic fixtures explicitly supply observations;
they do not experimentally establish an external coherence protocol.

\paragraph{Counterexamples and temporal limits.}
If a remote authority revokes access while the gate retains the old epoch,
the pass and database can agree while a current synchronous policy denies.
If recipient balance or deployment freeze status affects policy but is
omitted from the dependency projection, the theorem's factorization is false.
If the adapter hashes one body and sends another, action matching is
irrelevant to the actual effect. If a trusted API bearer fabricates a
receipt, the service-side binding check does not establish admission.
If two regions use separate nonce stores, both can admit a copy.
Each example violates a named premise rather than constituting evidence for
an unqualified preservation claim.

Finally, the proposition is about the admission instant. A permission or
resource condition can change after admission and before execution. A policy
requiring validity at the external effect needs an appropriate lease,
receiver-side enforcement, transactional resource operation, or another
check. The core's time interval is inclusive:
$\texttt{not\_before}\leq t\leq\texttt{expires\_at}$.
The implementation does not imply a strict upper bound or a global clock
agreement merely by checking that interval.

\section{Reference Implementation}
\label{sec:implementation}

\subsection{Signed exact-action approval}

The new implementation resides in \path{scripts/zerogate-revalidation/},
using the core evaluator in
\path{packages/cava-core/src/action-pass.js}.
\path{services.mjs} implements review, effect, metadata, snapshot, and audit
endpoints; \path{client.mjs} supplies the authenticated HTTP/HTTPS client.
Review requests are idempotent by operation identifier and request hash.
Repeating the same request can retrieve its original signed response;
reusing the identifier for a different review request is rejected. This
deduplication does not turn one exact-action pass into authority for multiple
actions.

\path{durable-gate.mjs} implements pass signing, signature verification, and
durable admission. The serializer recursively sorts object keys and
preserves array order, using JSON representations for supported scalar
values. The durable path rejects non-finite numbers and unsupported or lossy
non-JSON objects. These conventions define the artifact's serialization
contract; they are not a demonstrated cross-language canonical-JSON standard.

The pass hash is SHA-256 over the serialized payload excluding its own
\texttt{pass\_hash} field. The Ed25519 signature is over the domain-separated
string \texttt{osuite.zerogate.pass.v1}, a newline, and that hash.
The verifier recomputes the hash, selects the configured trusted key, checks
its binding to the pass's issuer key reference, and verifies the detached
signature and encoding. Ed25519's signature scheme is standardized separately
\citep{rfc8032}; these implementation checks do not imply a new cryptographic
construction. Issuer private keys are persisted, and the service refuses to
silently replace missing key material in an existing state directory.

\subsection{The admission transaction}

The gate requires a persistent database. It configures and verifies
\texttt{journal\_mode=WAL} and \texttt{synchronous=FULL}, validates schema
versioning, enables foreign keys, and uses an immediate write transaction.
SQLite documents the relevant WAL and synchronization behavior
\citep{sqliteWAL,sqliteSynchronous}. Durability still assumes the operating
system, filesystem, and device honor the required persistence operations;
process-kill tests are not a substitute for power-loss testing.

Within one \texttt{BEGIN IMMEDIATE} transaction, the gate:
\begin{enumerate}
  \item Reads the persisted snapshot and freezes the submitted pass into a
  supported JSON representation before verification and evaluation.
  \item Checks the signature, issuer/workspace scope, nonce, pass-identifier
  consistency, positive integer units, per-pass capacity, and any configured
  aggregate quota.
  \item Replaces caller-supplied replay, quota, and snapshot assertions with
  database values, then calls the core action-pass evaluator.
  \item Constructs a receipt binding the decision to the pass, observed
  action, issuer, workspace, nonce, units, and snapshot. On \Execute{}, it
  updates pass usage and applicable aggregate consumption and inserts the
  unique consumed nonce.
  \item Inserts the receipt and commits before returning a successful
  admission to the caller.
\end{enumerate}

The nonce key is unique within issuer and workspace. A separate pass-usage
key rejects reusing a pass identifier with changed contents. A deferred
foreign-key constraint links a consumed nonce to its receipt. Review and
deny decisions can also be durably recorded when the store is functioning,
but they do not consume the nonce as an execution. Receipt-availability
flags supplied to the core here are transaction obligations discharged by
the adapter, not independently observed proof supplied by the caller.

On transaction failure, including failure at commit, the adapter releases no
\Execute{} result. It returns a storage-failure denial without claiming a
persisted receipt. An ambiguous commit can nevertheless have consumed the
nonce; fail-closed output sacrifices availability rather than authorizing a
second attempt. A later recovery procedure must consult durable state.
Reopening an existing database does not overwrite its snapshot or quota
state with initialization arguments.

The default validity clock is sampled after acquiring the write lock, not
before waiting for it. Here the admission instant denotes that in-transaction
check sample, not the subsequent commit return or external effect. A
real-process regression holds the writer lock until a signed pass expires;
the waiting gate then records a denial without consuming authority. This
regression exposed a pre-lock sampling defect during revalidation. We fixed
it and repeated the complete cloud and fault campaigns; the measurements
reported below use the corrected, source-pinned implementation.

\subsection{Effect execution and crash semantics}

The cloud effect service uses managed identity to access Azure Blob Storage;
this storage credential is distinct from the bearer credential protecting
the research API. It attempts a conditional block-blob PUT with
\texttt{If-None-Match: *}, then reads back and checks content. Azure documents
Put Blob and its conditional-header semantics
\citep{azurePutBlob,azureConditional}. Stable operation identifiers, content
checks, and conditional object creation make retries of this narrow write
operation inspectable. An existing object is not accepted merely because
its name matches.

The service maintains its own effect records in a different SQLite database.
There is no transaction spanning the gate database, the service database,
and Azure. A process can fail after an Azure write but before recording it
locally. A later retry may reconcile the existing object by reading and
checking it. This is a receiver-specific retry strategy, not a general
exactly-once guarantee for payments, mail, or arbitrary APIs.

If the runner crashes after admission but before dispatch, the nonce and
receipt can survive while the effect is absent. Retrying admission is then
rejected as replay. Recovering and dispatching the already admitted operation
requires its retained intent and an explicit recovery policy. The fault
campaign demonstrates such a controlled recovery using saved request
material; the gate does not implement an autonomous durable outbox worker.
The service rechecks current pass validity on effect requests, including
retries. Demonstrated recovery completes within the original five-minute
window; repeating the same review request returns the stored pass rather
than renewing it. Post-expiry recovery therefore requires an additional
renewal and reconciliation protocol not evaluated here.
Even with an outbox, a lost acknowledgement leaves uncertainty unless the
receiver's idempotency or transactional contract resolves it.

\section{A Governed Write, from Intent to Evidence}
\label{sec:walkthrough}

The distinction between approval, admission and execution becomes easier to
inspect when each refers to an actual artifact. This section follows
\ReportCaseId{}, the first indexed prepared request in the first
concurrency-eight trial. The selection rule is descriptive, not a choice of
the fastest observation. The request writes 128 bytes to the experiment's
private Azure container. Its contents are synthetic test data, but the
write, signed pass, local receipt and downloaded object are real records
from the controlled cloud deployment. No customer document or customer
workload is implied.

Table~\ref{tab:integration-map} identifies the components behind this
example. They are logical responsibilities, not four independently secured
services. Sections~\ref{sec:contract} and~\ref{sec:implementation} describe
the trust assumptions and executable implementation respectively.

\begin{table}[!htbp]
\centering\small
\begin{tabularx}{\linewidth}{L{0.22\linewidth}Y}
\toprule
Role & Reference component and responsibility \\
\midrule
Approval issuer & The review endpoint in \path{services.mjs} checks the
fixture policy and signs the pass using service-side issuer key material. \\
Observing client & The trusted runner, using \path{client.mjs}, obtains
the pass, reconstructs the final body and action, and dispatches after
admission. It holds the research API bearer credential. \\
Durable gate & \path{durable-gate.mjs} verifies the pass and commits
consumption and receipt state in local SQLite. It calls the core
\path{action-pass.js} evaluator; that evaluator alone is not a durable
store or a non-bypassable dispatch boundary. \\
Effect service & The effect endpoint in \path{services.mjs} uses its
Azure managed identity to write the object. That storage identity is
distinct from the caller's API credential and the issuer's signing key. \\
\bottomrule
\end{tabularx}
\caption{Integration map for the controlled cloud example. Credential
separation by purpose is not proof that an authorized bearer holder cannot
bypass the reference gate.}
\label{tab:integration-map}
\end{table}

\subsection{Fix the object before asking for authority}

The request identifies an HTTP runtime, the executable
\texttt{AzureBlob.PutBlob}, the operation \texttt{put\_blob}, the destination
container, the object identifier, the payload length and a SHA-256 commitment
to the final body. The destination describes where the effect will occur;
the payload commitment describes what is being written. These are different
obligations. Checking the container alone would still permit a different
document to be written to an approved place.

The adapter obtains holder and session context from its trusted runtime.
Those fields are compared later, but their presence does not independently
authenticate the caller. In this implementation, that distinction matters:
the research API is protected by a bearer credential, and the runner that
holds it is within the trusted boundary. A production design that gives the
acting agent unrestricted access to that credential has not created an
independent gate simply by copying these fields into its requests.

An application can prepare this particular action only after these
policy-relevant details are fixed. It may prepare other context earlier,
but that is not the same as minting the exact pass tested here. If a later
agent step changes the content, this is a new approval problem, even when
the tool name and destination remain unchanged. Request identifiers are
useful for correlation and deduplication; they are not substitutes for the
content commitment.

\subsection{Review the final proposal and carry the signed result}

The issuer checks the experiment's deterministic policy over the request.
The returned pass, \ReportCasePassId{}, permits one write of that exact
canonical action. Its authority scope includes the operation, object,
container, storage system and a maximum payload-size constraint. The cloud
case's \texttt{requested\_ref} is null: this request is not an execution of
the separate vendor-neutral skill-contract experiment.
When a skill request is present, the upstream authorizer must keep its
grant within that request, as specified in
Section~\ref{foundation-authority-layers}. This gate checks the signed grant
and exact action, not the original skill contract. Even a smaller payload
requires a new pass if its action fingerprint changes.

The action fingerprint commits to the canonical action. The pass hash
commits to the larger authorization object, including that action binding,
holder, policy/state references, validity, nonce and budget. The detached
Ed25519 envelope authenticates the issuer's statement about that pass hash.
These objects have different jobs. Hash equality alone does not identify an
issuer, and an issuer signature does not prove that its decision procedure
was sound. Their composition gives the gate a specific statement to verify
against a configured trust root.

Figure~\ref{fig:walkthrough-bindings} shows the evidence path across the
local admission and external storage boundaries. The complete
hashes remain in the artifact; selected values are reproduced in
Appendix~\ref{app:field-guide} so a reader can locate this observation.

\begin{figure}[!htbp]
\centering
\begin{tikzpicture}[
  b/.style={zg node,align=center,text width=2.85cm,
    minimum height=1.35cm,font=\small,inner sep=5pt},
  f/.style={zg flow},
  note/.style={font=\scriptsize,align=center,text width=3cm}]
  \node[b,fill=zgBlue] (a) at (0,0) {Final action\\Body + target\\Fingerprint};
  \node[b,fill=zgBlue] (p) at (3.9,0) {Signed pass\\Authority + nonce\\Issuer envelope};
  \node[b,fill=zgCream] (r) at (7.8,0) {Admission receipt\\Pass + action\\Nonce + quota};
  \node[b,fill=zgMint] (e) at (11.7,0) {External outcome\\Object + ETag\\Content readback};
  \draw[f] (a) -- (p);
  \draw[f] (p) -- (r);
  \draw[f] (r) -- (e);
  \node[note] at (0,-1.55) {Reconstruct from bytes\nobreakspace to be sent};
  \node[note] at (3.9,-1.55) {Verify with a\\trusted issuer key};
  \node[note] at (7.8,-1.55) {Commit nonce, quota and\nobreakspace receipt together};
  \node[note] at (11.7,-1.55) {Check storage separately;\nobreakspace admission is not delivery};
\end{tikzpicture}
\caption{Binding chain for the governed write. The first three boxes lead
to a local admission decision; the external outcome is established through
a separate service and storage path. Arrows express data dependencies, not
measured time or a distributed atomic transaction.}
\label{fig:walkthrough-bindings}
\end{figure}

\subsection{Reconstruct, revalidate and durably admit}

Before dispatch, the client hashes the final body again and reconstructs
the action rather than trusting the pass's fingerprint as an observation.
The gate verifies the issuer envelope and checks the holder, local snapshot,
validity, required authority and unused nonce. Its transaction reads local
consumption state, not replay or quota assertions supplied by the caller.
This makes a repeated presentation a database consistency question rather
than a race between two agents that both claim to be first.

In this case the result is \Execute{}. Receipt 1761 binds the action
fingerprint and pass hash to the issuer, workspace, nonce, requested unit,
snapshot and decision. The nonce is consumed and the receipt is committed
before the result is returned. Here the requested unit is one budget unit,
not a description of the skill's requested authority. The receipt's validity-check timestamp is an
admission observation; it must not be used as a timestamp for the Azure
write. Process and service clocks are not subtracted to construct latency.

The final check is still substantial work. It reconstructs evidence,
verifies a signature, checks local state and writes durable records. The
fast-path design does not replace these tasks with a cache hit on an
\texttt{approved=true} flag. Its opportunity is to have already completed
the issuer round trip before the dispatch worker receives the action.

\subsection{Execute, then establish what actually happened}

After successful admission, the client submits the same request material to
the effect service. The service checks the pass and the receipt's declared
bindings, then uses its managed identity to access Azure. Conditional object
creation prevents an ordinary retry from blindly overwriting an existing
object. A subsequent read compares stored content with the expected
commitment. For this observation, the returned ETag is \ReportCaseETag{}.
The separately downloaded object also matches its recorded payload hash.

That chain allows a reader to ask several different questions: did the
issuer sign this pass; does it bind this action; is there a corresponding
local admission record; and do the retained storage bytes match the approved
payload? It does not let an unauthenticated receipt prove its own durability,
nor does it prove who physically sent an API request against a malicious
runner. The executable artifact verifier and the trust assumptions answer
different parts of that inquiry.

The experiment's effect endpoint labels receipt evidence as
\texttt{caller\_supplied\_binding\_only}. A receiver requiring independently
authenticated admission would need an additional mechanism, such as a
trusted gate assertion with a receiver-enforced validation contract and
protected signing authority. The present service does not supply that
mechanism. An external verifier can enrich the evidence or participate in
approval, but post-execution enrichment cannot retroactively enforce a
missing pre-execution gate.

\subsection{Read the two clocks, not just the fast one}

The \emph{boundary} starts when a dispatch worker picks up this action and
ends when the gate returns; worker pickup is not security admission.
The \emph{lifecycle} starts at this action's preparation and ends at its
explicit service verification. Queueing before preparation and the later
direct Azure downloads are outside that lifecycle. Both modes perform the
same issuance and gate work; synchronous mode performs issuance after
worker pickup. Section~\ref{sec:timing-definitions} gives the exact timer
definitions and exclusions, and Figure~\ref{fig:timing} shows their order.

Table~\ref{tab:walkthrough-times} derives its values directly from this
record's monotonic-clock durations. The unassigned interval is computed as
lifecycle minus the four displayed timed phases; it includes prepared-batch
dwell and otherwise unassigned harness intervals, not a separately
instrumented queue measurement. It is not inferred by subtracting wall-clock
timestamps from different hosts.

\begin{table}[!htbp]
\centering\small
\ReportCaseTimingTable
\caption{One selected prepared action, not a percentile or representative
service guarantee. The short dispatch boundary coexists with a much longer
recorded lifecycle. All durations belong to the selected record identified
at the start of this section.}
\label{tab:walkthrough-times}
\end{table}

In an interactive deployment, the relevant product question would be when
the user actually requested execution relative to preparation. The
experiment does not measure that user clock. If preparation and its wait
were already complete while the user was reviewing a finished export, the
last step might benefit. If the user was waiting for that preparation from
the outset, the same work would be visible delay. This difference is why a
short final boundary is useful evidence but is not yet a measured
``tap-and-go'' user-experience result.

\subsection{Handle interruption without inventing a second authority}

The separate fault campaign kills a process after durable admission and
before dispatch. The receipt and nonce remain, and the service has no
effect record. Its HTTP 404 checks that record, not Azure directly; the
controlled pre-dispatch stop supplies the additional protocol context.
A repeated admission is rejected; otherwise an ambiguous outcome
could become permission to spend the same authority twice. The controller
instead recovers the retained original request and delivers that already
admitted operation. The storage service's idempotency/content contract is
essential to this particular recovery. Delivery completes within the
original five-minute pass validity window. The service checks validity
again on effect requests, including retries, so a retained admission does
not authorize indefinite recovery. Repeating the same review request returns
the stored pass, not a renewed one. Post-expiry recovery needs an explicit
renewal and reconciliation design that this experiment does not establish.

Figure~\ref{fig:walkthrough-recovery} distinguishes consuming authority from
retrying delivery. It also explains why a durable receipt must not simply
be labeled a completed action in a user interface. A useful operations view
would show \emph{prepared}, \emph{admitted}, \emph{outcome pending} and
\emph{effect verified} separately. These are an interpretation of the
recorded protocol states, not a claim that this study deployed new screens
in OSuite Studio, the product's operator interface, or an autonomous
recovery service.

\begin{figure}[!htbp]
\centering
\begin{tikzpicture}[
  state/.style={zg node,text width=5.1cm,
    minimum height=1.1cm,align=center,font=\small,inner sep=5pt},
  f/.style={zg flow},
  lab/.style={font=\scriptsize,fill=white,inner sep=3pt}]
  \node[state,fill=zgBlue] (signed) at (0,0) {Prepared\\Signed pass; unused nonce};
  \node[state,fill=zgCream] (admitted) at (0,-2) {Admitted\\Nonce + receipt committed};
  \node[state,fill=zgPeach] (pending) at (0,-4) {Outcome pending\\Dispatch or response uncertain};
  \node[state,fill=zgMint] (verified) at (0,-6.3) {Effect verified\\Stored content matches commitment};
  \node[state,fill=zgRose,dashed] (stopped) at (7.7,-2) {Process stopped before dispatch\\Receipt exists; no service effect row};
  \node[state,fill=zgPeach] (recover) at (7.7,-4) {Recover retained intent\\Retry admission is denied};
  \node[state,fill=zgMint] (deliver) at (7.7,-6.3) {Retry within original validity\\Same operation ID and body};
  \draw[f] (signed) -- node[lab,right] {durable admission} (admitted);
  \draw[f] (admitted) -- (pending);
  \draw[f] (pending) -- node[lab,left] {readback} (verified);
  \draw[f,dashed] (admitted) -- node[lab,above] {fault} (stopped);
  \draw[f] (stopped) -- (recover);
  \draw[f] (recover) -- (deliver);
  \draw[f] (deliver) -- node[lab,above,align=center,text width=1.9cm]
    {verify\\receiver\\result} (verified);
\end{tikzpicture}
\caption{Admission and recovery state view. The dashed branch corresponds
to the controlled post-commit/pre-dispatch process kill. Recovery is performed
by the test controller using retained intent, not by a shipped autonomous
outbox. Re-delivery is not a second successful admission, and the shown
recovery does not extend pass validity.}
\label{fig:walkthrough-recovery}
\end{figure}

The missing-evidence and changed-payload tests exercise different branches:
they stop before execution rather than invoke recovery. In the local
policy-refresh case, updating the snapshot makes the old pass return review;
a separately signed replacement with a new pass ID and nonce then obtains
execute. That case does not dispatch a cloud effect. Keeping these branches visible is
important commercially: an operator needs to distinguish ``waiting for
authority,'' ``already admitted but delivery uncertain,'' and ``rejected,''
because each requires a different intervention.

\section{Evaluation Method}
\label{sec:method}

The evaluation keeps three evidence classes separate. The semantic study
checks decisions on authored fixtures. The cloud study measures actual
review, admission, dispatch, and storage verification through one deployed
configuration. The fault study checks controlled histories, persistent
records, and effects. None can substitute for the others: semantic receipt
flags are not disk writes, cloud timings are not policy ground truth, and a
successful crash scenario does not prove all failure histories.

\subsection{Cloud design and comparison}

\path{cloud-study.mjs} defines a target design of \PlanCloudTrials{} trials,
\PlanCloudPerCell{} actions per trial and mode, \PlanCloudModes{} modes, and
\PlanCloudLevels{} concurrency levels (\PlanCloudConcurrency{}), for
\PlanCloudAttempts{} logical action attempts. This is a planned sample
count, not a count of completed writes. The generated result table reports
actual attempts, successes, and failures. The operation is a real Azure Blob
PUT followed by GET-based verification; a logical action can cause several
HTTP requests, including additional readbacks and retries. Logical actions
must not be reported as a count of Azure HTTP operations.

The cloud deployment uses the D4s\_v3 class in Central US, zone 1, with
four virtual CPUs and 16\,GB of memory per VM. The client reports an Intel
Xeon 8272CL and the service an Intel Xeon 8370C; a shared VM class is not an
identical physical processor. The recorded cloud runtime is Node.js 24.21.0
with SQLite 3.53.4. Both VMs use Ubuntu 24.04 LTS, image version
24.04.202609040. The WAL is on a 64\,GB StandardSSD\_LRS operating-system
disk, not Premium SSD. HTTP(S) clients use keep-alive connections with a
30-second socket timeout; this is not a hard end-to-end deadline. The gate's
SQLite busy timeout is 5,000\,ms. These environment facts describe the
configured deployment, not independently sampled cloud environments.
Semantic timings come from a separate Node.js 22.18.0, macOS/arm64
environment. Its manifest does not retain CPU, RAM, or OS-release details;
those missing collection-time facts cannot be recovered from the cloud
configuration. Semantic and cloud timings must not be pooled.

Both cloud modes call the same deterministic issuer, obtain an exact-action
signed pass, reconstruct the final body, call the same durable gate, and use
the same effect and verification endpoints. In \emph{synchronous} mode,
worker admission is recorded before remote review and signing. In \emph{prepared}
mode, the entire cell's batch is reviewed first using the configured worker
pool, and the dispatch phase begins only after that preparation phase
finishes. The modes move minting; they do not eliminate it, replace it with a
cheaper decision, or amortize one pass over different actions.

Preparation is charged to each action's lifecycle, including the wait after
its own preparation while the remainder of the batch finishes and workers
become available. Thus prepared execution can have a shorter ready-to-dispatch
interval and a \emph{longer complete lifecycle}. We report both.
The batch design evaluates placement
of known work, not successful prediction of unknown future agent requests.
It does not include wasted preparations for actions never selected by an
agent, so deployments with speculation must additionally count those costs.

Requests use fresh operation identifiers and payload sizes cycling through
128, 1,024, and 3,072 bytes: respectively 54, 53, and 53 actions in each
160-action cell. The client uses a bounded-concurrency closed-loop pool, not
an open-loop arrival process. Concurrency levels run in order 1, 8, then 32;
within each level the five trials use synchronous-first order in trials
1, 3, and 5, and prepared-first order in trials 2 and 4. The same client
connection pools and gate database remain open across cells, and no cloud
warm-up observations are discarded. The study is therefore not a
fully randomized crossover, and requests are not identical paired operation
identifiers across modes. Review deduplication cannot be used to skip minting
in one mode by reusing another mode's identifiers. Ordinary cloud passes
exercise per-pass single use; aggregate quota contention is examined
separately in the fault campaign. Separating review into a prior batch changes
the offered request pattern and service contention. The comparison therefore
does not isolate a cryptographic improvement under identical arrivals.

Cloud concurrency denotes asynchronous workers in one client process.
Each local \texttt{gate.commit} call is synchronous, so these calls serialize
within that process while remote requests can overlap. The cloud study is
therefore not a measurement of 32 parallel gate writers. Separate-process
contention on a shared gate is tested by the fault campaign instead.

\subsection{Timing definitions}
\label{sec:timing-definitions}

All elapsed times are measured using the client's monotonic clock.
Pass validity uses host wall-clock time instead. The artifact does not
retain a measured client--service clock offset, so it cannot quantify
cross-host validity error or a clock-skew tolerance.
Let $t_p$ start preparation, $t_m$ mark receipt of the signed pass, $t_r$
mark admission to a dispatch worker, $t_g$ mark return from successful
durable admission, $t_e$
mark return from effect dispatch, and $t_v$ mark completion of explicit
verification. The worker-admission-to-dispatch boundary and lifecycle are
\[
 T_{\rm boundary}=t_g-t_r,\qquad
 T_{\rm lifecycle}=t_v-t_p.
\]
In synchronous mode, $t_r$ precedes preparation; in prepared mode, $t_r$
follows preparation and batch dwell. In both modes the worker has already
acquired the item from its pool before $t_r$ is taken. The raw field
\texttt{ready\_to\_dispatch} therefore measures this internal boundary, not
latency from exogenous request arrival, time waiting to acquire a worker,
or a queue-inclusive service-level objective. The separate gate timer starts before
final runtime-observation reconstruction and includes that work together
with gate evaluation and persistence. It is not a bare SQL or signature-only
microbenchmark. Effect latency includes the service's work, potentially
including its internal readback; verification latency measures the subsequent
explicit GET request.

\begin{figure}[tbp]
\centering
\begin{tikzpicture}[
  seg/.style={zg node,align=center,font=\scriptsize,minimum height=0.75cm,
    inner sep=2pt},
  zgmark/.style={zg flow,line width=0.45pt},
  txt/.style={font=\scriptsize,align=center}]
  \node[anchor=west,font=\small\bfseries] at (0,1.35) {Synchronous};
  \node[seg,fill=zgBlue,minimum width=3.28cm] at (1.7,0) {Review + mint};
  \node[seg,fill=zgCream,minimum width=1.98cm,text width=1.7cm] at (4.45,0) {Observe + gate};
  \node[seg,fill=zgMint,minimum width=3.28cm] at (7.2,0) {Effect request};
  \node[seg,minimum width=3.28cm] at (10.6,0) {Verify};
  \draw[zgmark] (0,0.7) -- node[txt,above] {ready-to-dispatch} (5.5,0.7);
  \draw[zgmark] (0,-0.7) -- node[txt,below] {lifecycle: preparation through verification} (12.3,-0.7);
  \node[anchor=west,font=\small\bfseries] at (0,-1.6) {Prepared};
  \node[seg,fill=zgBlue,minimum width=2.33cm] at (1.225,-2.5) {Review + mint};
  \node[seg,fill=zgPeach,minimum width=2.33cm] at (3.675,-2.5) {Batch dwell};
  \node[seg,fill=zgCream,minimum width=2.33cm] at (6.125,-2.5) {Observe + gate};
  \node[seg,fill=zgMint,minimum width=2.33cm] at (8.575,-2.5) {Effect request};
  \node[seg,minimum width=2.38cm] at (11.05,-2.5) {Verify};
  \draw[zgmark] (4.9,-1.8) -- node[txt,above] {ready-to-dispatch} (7.35,-1.8);
  \draw[zgmark] (0,-3.2) -- node[txt,below] {lifecycle: includes preparation and batch dwell} (12.3,-3.2);
\end{tikzpicture}
\caption{Logical ordering only; segment widths are unscaled and encode no
measured latency or speedup. Readiness starts before review in synchronous
mode and after preparation/dwell in prepared mode.}
\label{fig:timing}
\end{figure}

The lifecycle begins when that action starts preparation, not when the
experiment first enqueues the whole workload. It includes subsequent batch
and worker waiting but excludes queueing before its own preparation starts.
Small harness intervals are included by elapsed-clock measurement rather
than reconstructed by summing rounded phase statistics. Percentiles from
different phases are not additive.

Every attempted action remains in the JSONL record stream, with its error and
available phase timings. Cloud latency tables use successful attempts only;
they must be read alongside attempted, successful, and failed counts.
Preparation, admission, effect, or verification failure is not filled with a
zero latency or silently treated as a successful fast rejection. A trial
without successes contributes failures but cannot contribute a successful
latency mean. A separate verifier can summarize all finite observed phases,
including partially timed failures; those diagnostics must not be mislabeled
as the success-conditioned table.

\paragraph{Direct storage retrieval.}
After the timed cloud run, a separate author-operated Azure CLI path lists
and downloads the actual blobs without using the experimental HTTP service.
\path{verify-readback.mjs} compares downloaded bytes with each recorded
request body, recomputes SHA-256, and compares the Azure inventory's size and
ETag with the recorded effect. The evidence is
\path{azure-direct-readback/}, \path{azure-direct-inventory.json}, and
\path{azure-direct-verification.json} under the artifact root. These checks
are outside the timed lifecycle. They corroborate stored content through
another retrieval path, not independent human replication, a signed Azure
attestation, or proof of the local gate's fsync.

\subsection{Semantic corpus and methods}

\path{semantic-benchmark.mjs} renders \PlanSemanticScenarios{} authored
scenarios into \PlanSemanticDomains{} domains: payment, mail, deployment,
and storage. The Cartesian product gives \PlanSemanticUnique{} unique
scenario--domain combinations. \PlanSemanticRepeats{} complete repeats with
varied request/session identifiers yield a target of
\PlanSemanticInputs{} input instances per method. Each input is evaluated
by \PlanSemanticMethods{} methods. Repeating a template changes observations,
not the number of independently designed semantic cases.

Construction starts with a core-built pass and a matching action/holder
fixture; a scenario then mutates the relevant fields. Five of the 73
templates expect allow and 68 expect not allow. Across four domains and
five repeats this yields 100 expected-allow and 1,360 expected-not-allow
inputs per method. This shared construction is another reason to distinguish
implementation conformance from independent validation of the specification.

The scenarios cover valid actions, narrowed authority, budget boundaries,
action and payload substitutions, falsely repeated old fingerprints, holder
and session mismatch, policy/state/revocation drift, missing predicates,
authority widening and type changes, exhaustion, replay, receipt and privacy
requirements, slot conflict, and compound mutations. The domain renderings
are hand-constructed canonical objects, not independent parsers of production
requests. No real payment, deployment, or mail effect is executed by this
semantic study.

Each scenario has a readable authored expectation for whether dispatch is
allowed. The expected label is metadata and is not supplied as an input to
the decision methods. Nevertheless, labels, fixtures, and the reference
policy come from the same authors. They are test expectations, not
independently collected ground truth. The complete embedded fixture policy
uses a separate implementation of the checks rather than calling the core
validator or hash helpers, which helps detect implementation disagreement
but does not create an independent policy specification.

\begin{table}[tbp]
\centering
\small
\begin{tabularx}{\linewidth}{L{0.32\linewidth}Y}
\toprule
Method identifier & What the method actually does \\
\midrule
\path{action_pass_core} &
Core evaluator including receipt construction; fixture flags assert receipt
availability, not measured persistence. \\
\path{embedded_policy_v1} &
Separately implemented complete authored fixture policy, decision only. \\
\path{without_action_binding_v1} &
Embedded policy without the action-binding acceptance check. \\
\path{without_freshness_v1} &
Embedded policy without policy/state/revocation freshness and predicate
checks; revocation-list checks remain. \\
\path{without_receipt_v1} &
Embedded policy without receipt-evidence acceptance checks. \\
\path{cava_action_binding_only_v1} &
Canonical-action equality and hash binding only, not the complete
\cava{} protocol. \\
\path{raw_scope_v1} &
Operation and resource membership only, not a representative capability
system or production IAM policy. \\
\bottomrule
\end{tabularx}
\caption{Semantic comparison methods. Omission variants diagnose this fixture
policy; they are not measurements of OPA, Cedar, Macaroons, or Biscuit.}
\label{tab:methods}
\end{table}

Input order uses seed 20260921 and is shuffled within complete scenario--domain blocks;
method order is shuffled per input. Timing starts after fixture creation
and cloning, measures an actual synchronous method call, and excludes JSONL
serialization and disk output. It includes cold/JIT effects because the run
does not discard a dedicated warm-up phase. The core constructs receipts,
while decision-only methods do not. Algorithms and hashing implementations
also differ. Their timing comparison is descriptive, not an isolated causal
estimate of the cost of a single check or a claim about production throughput.
This benchmark does not measure Ed25519 verification, network review, SQLite
persistence, or cross-runtime action identity.

\subsection{Outcomes, denominators, and uncertainty}

For semantic reporting, \Execute{} maps to allow, while both \Deny{}
and \Review{} map to not allow. Exceptions and invalid outcomes map to a
separate error column. Let $N_-$ and $N_+$ be all evaluations with
expected not-allow and allow labels, respectively, including error outcomes.
The conditional disagreement fractions are
\[
 \widehat p_{\rm FA} =
 \frac{\#(\text{expected not allow, actual allow})}{N_-},
 \qquad
 \widehat p_{\rm FNA} =
 \frac{\#(\text{expected allow, actual not allow})}{N_+}.
\]
Errors are retained in those denominators but are neither credited as correct
rejections nor silently added to the non-error numerator. The confusion
matrix and error count must accompany these fractions. If overall failure
on expected-allow inputs is desired, its numerator is the sum of actual
not-allow and error outcomes, explicitly labeled as such. The binary label
design cannot support a claim of accurate escalation between deny and review.

\path{statistics.mjs} uses nearest-rank latency percentiles and reports
conditional fractions with Wilson intervals. Mean-latency intervals resample
complete trial clusters using a percentile bootstrap with 1,000 resamples
and seed 20260921. These intervals
describe repeated trials of this experiment. The semantic corpus has
strongly related templates and repeated same-process observations, so a
Wilson interval's independent-Bernoulli interpretation is only a descriptive
approximation, not a population security guarantee. Cloud trials on the same
deployment likewise do not represent independent cloud deployments. Empty
samples or insufficient clusters produce unavailable estimates rather than
invented precision. No observed absence of a fixture error proves absence of
bugs.

For the core's observed zero false allows in 1,360 expected-not-allow
inputs, the descriptive 95\% Wilson interval is [0, 0.282\%]. For zero
false not-allows in 100 expected-allow inputs it is [0, 3.699\%]. The smaller
allow denominator gives a wider interval. Neither interval turns the
correlated authored corpus into an independent sample of real-world attacks
or establishes a deployment error-rate bound.

\subsection{Controlled fault campaign}

\path{fault-study.mjs} exercises fresh success and negative submissions for
payload mutation, expiry, missing authority, and invalid signatures. It
also tests explicit local policy-snapshot refresh, same-pass races,
distinct-pass aggregate-quota races, a crash after durable commit, and a
discarded acknowledgement followed by retry. The table reports each case's
observed outcome rather than treating the existence of a test as a pass.

Each of the ten campaign scenarios contributes one retained history, not a
repeated estimate of failure frequency. Each race starts eight separate
worker processes sharing one gate database.
The same-pass case checks that only the first committed admission consumes
the nonce. The quota case uses distinct signed passes contending on a
configured three-unit aggregate limit: three admissions commit and five
are denied. No external effect is dispatched in that quota-only scenario;
the measurement is admission conservation, not eight competing Azure writes.
Policy-refresh and quota-race phases use freshly generated local
issuer keys; they are not measurements of remote issuer refresh propagation
or Azure quota management.

The crash worker reports that its transaction and saved result have
completed, then is killed before effect dispatch. The controller checks
receipt recovery, rejection of readmission, and absence of a service effect
record before explicitly recovering the retained operation. HTTP 404 means
that no service row exists; it is not a direct Azure absence check and alone
cannot exclude a write made before a service-record failure. The controlled
pre-dispatch stop supplies the narrower recovery context. The acknowledgement case
intentionally discards a response already received by the client and retries
the same request. It is not an injected packet-loss experiment. Expiry is
tested by advancing the trusted observation time, not by sleeping until
wall-clock expiry. Service GET evidence includes a hash recomputed from
persisted bytes; the client does not receive the raw object body through that
evidence endpoint.

Unit tests additionally exercise rollback, deferred-constraint failure,
schema checks, and persistent state across reopen. These are useful checks
of implementation obligations, but the campaign is neither exhaustive fault
model checking nor a power-loss, corrupted-storage, or Byzantine-host study.

\section{Results and Interpretation}
\label{sec:results}

All measured values below are supplied by the result include generated from
the frozen artifact, rather than transcribed into the manuscript. The cloud
artifact contains \CloudRecords{} recorded logical attempts,
\CloudSuccesses{} successes, and \CloudFailures{} failures.
The semantic artifact contains \SemanticCases{} inputs per method,
\SemanticEvaluations{} total method evaluations, and
\SemanticUniqueCases{} unique scenario--domain combinations.
Counts distinguish planned workload size from recorded and successful
observations.

\begin{table}[tbp]
\centering
\footnotesize
\CloudResultsTable
\caption{Cloud results by mode and concurrency; times are milliseconds.
Successes are shown over attempted actions, retaining failures in the
denominator. Boundary and lifecycle latency summaries condition on successful
attempts. Preparation and batch dwell are charged to lifecycle.}
\label{tab:cloud}
\end{table}

Table~\ref{tab:cloud} shows lower prepared worker-admission-to-dispatch p95
at each concurrency level. However, prepared mean complete lifecycle is
longer at every level, as shown in Figure~\ref{fig:lifecycle}; lifecycle p95
is also higher. Earlier review and the batch's actual dwell are paid rather
than removed. The larger boundary separation at higher concurrency includes
closed-loop service contention and different offered arrival patterns, not
a pure cryptographic gain. This experiment demonstrates a cost-placement
tradeoff, not net end-to-end acceleration.

For readers comparing deployment choices, Table~\ref{tab:paired-tradeoff}
puts the two clocks together. Its percentage is the arithmetic reduction
of the observed boundary p95, not a throughput gain, reduction in total
authorization work, or comparison with an earlier Studio deployment.
The synchronous mode is a controlled placement alternative using the same
research implementation, not a benchmark of a competing product.

The separate direct Azure retrieval validates all \CloudSuccesses{} successful
objects: downloaded bodies, SHA-256 values, lengths, and ETags agree with the
recorded requests and effects. This is stronger evidence of actual stored
content than relying only on the service's GET response, while remaining
author-operated and separate from proof of admission durability.

\begin{table}[tbp]
\centering
\footnotesize
\CloudIntervalTable
\caption{Mean worker-admission-to-dispatch latency and trial-cluster bootstrap confidence
intervals, in milliseconds. Repeated trials share the configured deployment;
these are not intervals over independent hosts or cloud regions.}
\label{tab:cloud-intervals}
\end{table}

\begin{figure}[tbp]
\centering
\includegraphics[width=\linewidth]{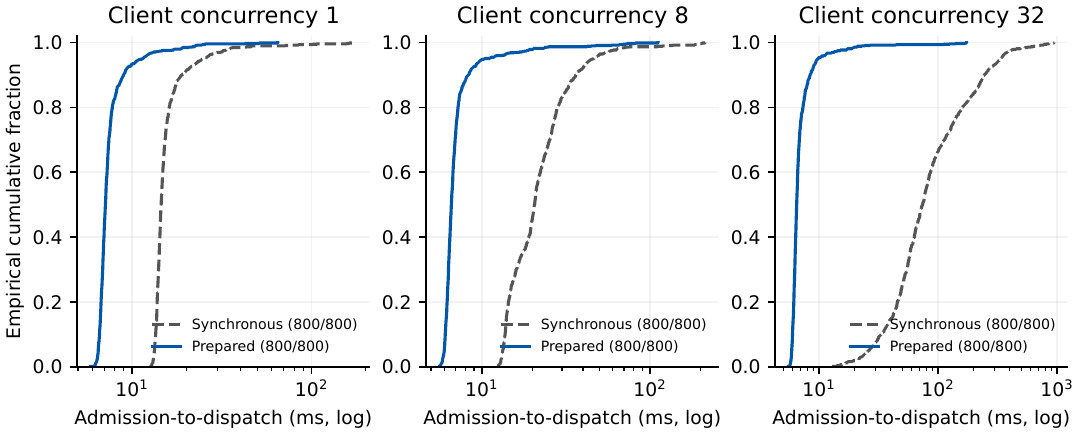}
\caption{Empirical worker-admission-to-dispatch latency distributions from
successful cloud records. Prepared readiness excludes already completed
review and minting; synchronous readiness includes them. Neither boundary
includes waiting to acquire its dispatch worker. Lifecycle includes preparation and
prepared-batch dwell in both the tables and Figure~\ref{fig:lifecycle}.
These curves do not represent complete-lifecycle speedup.}
\label{fig:latency}
\end{figure}

\begin{figure}[tbp]
\centering
\includegraphics[width=\linewidth]{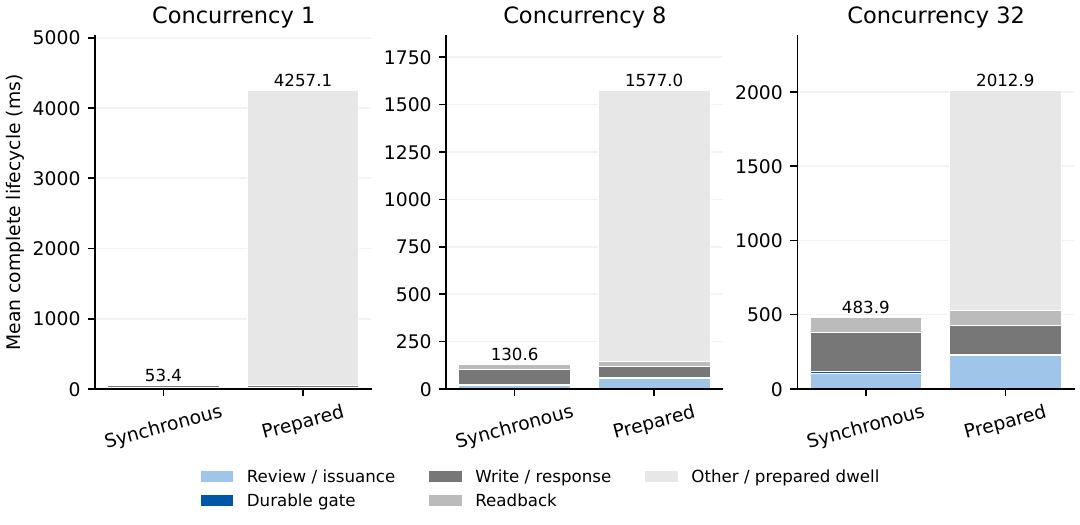}
\caption{Mean complete-lifecycle decomposition from the cloud records.
Prepared ready-to-dispatch excludes preparation, but the lifecycle shown here
includes review/minting, batch dwell and subsequent waiting, admission,
effect, and verification. Prepared lifecycle is longer despite its shorter
dispatch boundary. Component means, not phase percentiles,
are stacked.}
\label{fig:lifecycle}
\end{figure}

\FloatBarrier
\begin{table}[H]
\centering
\footnotesize
\SemanticResultsTable
\caption{Semantic outcomes against authored binary expectations, with
per-method denominators and errors. These observations are not independent
human labels, escalation accuracy, or cross-runtime generalization.
Method definitions appear in Table~\ref{tab:methods}.}
\label{tab:semantic}
\end{table}

In the source-pinned semantic artifact, the core matches the authored binary
expectations on every evaluated input, without an error outcome being
credited as a correct rejection. Table~\ref{tab:semantic} reports the exact
denominators and comparison-method outcomes. This supports conformance on
the exercised fixture contract. It does not validate the completeness of
the authors' expectations or establish Proposition~\ref{prop:preservation}'s
freshness and mediation premises in a deployed environment. Omission-method
disagreements are evidence about the omitted checks on these scenarios,
not measured failure rates of mature authorization products.

\begin{table}[H]
\centering
\footnotesize
\FaultResultsTable
\caption{Recorded outcomes for \FaultCases{} controlled fault and concurrency
cases. Durable admission, receipt recovery, replay rejection, and observed
external effects are distinct assertions. ``Absent (HTTP 404)'' denotes
absence of a service record, not an independent Azure absence check.
Local-key phases and deliberate
acknowledgement discard are not remote coherence or arbitrary network-fault
experiments.}
\label{tab:fault}
\end{table}

Table~\ref{tab:fault} separates admission from delivery: a crash can consume
authority without an effect. Successful blob retries do not establish
exactly-once execution or idempotence for arbitrary APIs.

\section{Scale, Preparation, and Deployment Decisions}
\label{scale-deployment}

The results establish a placement tradeoff, not a fleet-capacity result.
Prepared execution shortened the measured dispatch boundary while increasing
mean complete lifecycle latency at each tested concurrency in the frozen
study. Scaling this arrangement therefore requires asking which work
deserves preparation, what happens
while requests wait, and who prevents an unapproved effect. This section
develops three \emph{proposed extensions}: action-stream classification,
finite-capacity routing, and adaptive preparation support. These three
mechanisms are not implemented in the frozen study. Their diagrams specify
design obligations rather than measured recognition, queueing, or latency
performance.

\subsection{Classifying an action stream without transferring authority}
\label{scale-classification}

An agent trace mixes intentions, observations, dependencies, and effects.
Counting every trace item as a fresh authorization request wastes review
effort; treating the whole trace as one approved task risks concealing an
additional effect. The useful classification is consequently about the
operation's semantics, not its tool name, conversational grouping, or
position in a workflow. Figure~\ref{scale-fig-stream} distinguishes five
cases that require different handling.

\begin{figure}[tbp]
\centering
\begin{tikzpicture}[
  box/.style={zg node,align=center,font=\small,
    inner sep=5pt,minimum height=0.90cm},
  flow/.style={zg flow}]
  \node[box,text width=13.7cm,fill=zgBlue] (observe) at (0,0)
    {Observed runtime stream: classify each item by its actual operation};
  \node[box,text width=3.7cm] (context) at (-4.8,-1.65)
    {Context only};
  \node[box,text width=8.0cm] (contextout) at (2.15,-1.65)
    {Attach provenance; no execution authority};
  \node[box,text width=3.7cm,fill=zgBlue] (read) at (-4.8,-2.95)
    {Read evidence};
  \node[box,text width=8.0cm,fill=zgBlue] (readout) at (2.15,-2.95)
    {Authorize the read if required; bind source and freshness};
  \node[box,text width=3.7cm,fill=zgBlue] (dependent) at (-4.8,-4.25)
    {Dependent request};
  \node[box,text width=8.0cm,fill=zgBlue] (dependentout) at (2.15,-4.25)
    {Resolve prerequisites;\\each distinct effect needs its own pass};
  \node[box,text width=3.7cm,fill=zgCream] (new) at (-4.8,-5.55)
    {New side effect};
  \node[box,text width=8.0cm,fill=zgCream] (newout) at (2.15,-5.55)
    {Construct exact action;\\approve independently or refuse};
  \node[box,text width=3.7cm,fill=zgPeach] (ambiguous) at (-4.8,-6.85)
    {Ambiguous evidence};
  \node[box,text width=8.0cm,fill=zgPeach] (ambiguousout) at (2.15,-6.85)
    {Hold for clarification or review;\\no inferred permission};
  \draw[flow] (observe.south) -- (0,-0.80) -- (-7.2,-0.80)
    -- (-7.2,-6.85) -- (ambiguous.west);
  \draw[flow] (-7.2,-1.65) -- (context.west);
  \draw[flow] (-7.2,-2.95) -- (read.west);
  \draw[flow] (-7.2,-4.25) -- (dependent.west);
  \draw[flow] (-7.2,-5.55) -- (new.west);
  \draw[flow] (context) -- (contextout);
  \draw[flow] (read) -- (readout);
  \draw[flow] (dependent) -- (dependentout);
  \draw[flow] (new) -- (newout);
  \draw[flow] (ambiguous) -- (ambiguousout);
\end{tikzpicture}
\caption{Proposed action-stream classification, not an evaluated classifier.
Rows describe handling obligations, not observed proportions. A group label
cannot authorize a piggybacked action; shared review never merges nonces.}
\label{scale-fig-stream}
\end{figure}

Context may explain a decision without crossing an execution boundary.
Read evidence is different: obtaining it can require authorization, disclose
data, or invoke a service with hidden effects. Calling an operation a read
does not exempt it from mediation. A dependent request may share inputs
with an earlier action, but dependency is not delegated authority. For
example, preparing a deployment description does not authorize the later
deployment, and approving that deployment does not authorize a notification
to a new external recipient. Each distinct effect requires its own exact
action binding and consumption identity. If downstream bytes depend on an
unresolved result, final pass issuance must wait until they are known.

Reviewers could share evidence acquisition or inspect related requests
together, provided each outcome remains separately attributable. Such batching
must not merge nonces, pool approvals into a reusable class credential, or
allow an extra operation to piggyback on an approved group label. Ambiguity
requires clarification, unsupported status, or refusal rather than an
optimistic classification. Evaluation would need independently adjudicated
operation boundaries, including compound commands and concealed mutations;
parser coverage alone would not measure false allows.

A compound operation also needs an explicit boundary definition. If a
destination genuinely exposes one atomic request containing several changes,
its canonical action must describe all policy-relevant changes and the
issuer must approve that whole operation. This does not permit an adapter
to collapse independently dispatchable calls into one action merely for
convenience. Classification records should retain the original request,
derived action, dependency links, and uncertainty reason so an auditor can
reconstruct the separation. Evidence may be reused when its provenance and
validity permit; execution authority is not inherited from that reuse.

\subsection{Finite capacity after safety eligibility}
\label{scale-capacity}

Even correctly separated requests compete for issuer workers, review time,
resource locks, durable storage, and destination capacity. The proposed
router first determines which next steps are admissible: gather evidence,
seek review, attempt admission, or terminate. Eligibility to enter an
admission queue is provisional, not permission to execute.
Figure~\ref{scale-fig-routing} places optimization between this classification
and the final safety check. Waiting can invalidate evidence, so the gate
must revalidate after scheduling and release execution only after durable
admission.

In this proposal, \emph{review work} means obtaining a fresh authorization
decision or the evidence it needs; that work may be automated or involve a
person. It is distinct from the gate's \Review{} outcome, which permits no
execution (Section~\ref{sec:contract}). The frozen cloud issuer uses an
automated fixture policy, not a human reviewer queue.

\begin{figure}[tbp]
\centering
\begin{tikzpicture}[
  box/.style={zg node,align=center,font=\small,
    text width=3.85cm,minimum height=1.02cm,inner sep=5pt},
  flow/.style={zg flow}]
  \node[box,text width=13.7cm,fill=zgCream] (safety) at (0,0)
    {Safety eligibility: exact action, authority, evidence, dependencies};
  \node[box,fill=zgPeach] (review) at (-4.8,-1.9)
    {$Q_{\rm review}$: bounded queue\\Missing or changed evidence};
  \node[box,fill=zgBlue] (ready) at (0,-1.9)
    {$Q_{\rm ready}$: bounded queue\\Candidate for admission};
  \node[box,fill=zgRose] (stop) at (4.8,-1.9)
    {Refuse / unsupported\\No execution path};
  \node[box] (reviewer) at (-4.8,-3.7)
    {Review scheduler\\Capacity and aging;\\review deadlines};
  \node[box] (scheduler) at (0,-3.7)
    {Resource scheduler\\Capacity and prerequisites};
  \node[box,fill=zgBlue] (return) at (-4.8,-5.5)
    {Fresh decision\\and evidence\\Return to eligibility};
  \node[box,fill=zgCream] (gate) at (0,-5.5)
    {Final gate revalidation\\Nonce + quota + receipt};
  \node[box,fill=zgRose] (nonexecute) at (0,-7.4)
    {Review / deny\\No dispatch};
  \node[box,fill=zgMint] (dispatch) at (4.8,-7.4)
    {Attempt external effect\\Then verify outcome};
  \draw[flow] (safety.south) -- (0,-0.90) -- (-4.8,-0.90)
    -- (review.north);
  \draw[flow] (safety.south) -- (ready.north);
  \draw[flow] (0,-0.90) -- (4.8,-0.90) -- (stop.north);
  \draw[flow] (review) -- (reviewer);
  \draw[flow] (ready) -- (scheduler);
  \draw[flow] (reviewer) -- (return);
  \draw[flow,dashed] (return.west) -- (-7.45,-5.5)
    -- (-7.45,0) -- (safety.west);
  \draw[flow] (scheduler) -- (gate);
  \draw[flow] (gate) -- (nonexecute);
  \draw[flow] (gate.east) -- (4.8,-5.5) --
    node[font=\scriptsize,align=right,left=3pt]
      {execute +\\successful commit} (dispatch.north);
  \node[font=\scriptsize,align=center,text width=13.7cm] at (0,-8.55)
    {Schedulers order eligible work; they grant no authority.\\
     Overflow or deadline expiry requires an explicit non-executing outcome.};
\end{tikzpicture}
\caption{Proposed finite-capacity routing. The cream-colored eligibility and
gate boxes are safety decisions;
schedulers optimize only the permitted next steps. Queue lengths, service
times, and deadline policies are deployment inputs, not measured constants.
The final gate may refuse a queued candidate.}
\label{scale-fig-routing}
\end{figure}

An illustrative discrete-slot model makes this separation explicit. Let
$x_{irt}$ select the next step for request $i$ on route $r$ in slot $t$;
$e_{irt}$ indicates that this step is currently eligible. Let $d_{irk}$ be
its demand on resource $k$ and $C_{kt}$ the available capacity. A necessary
feasibility fragment is
\begin{equation}
\begin{aligned}
 x_{irt}&\in\{0,1\}, & x_{irt}&\leq e_{irt},\\
 \sum_{r,t}x_{irt}&\leq 1\quad\text{for each pending step }i,
 &\sum_{i,r}d_{irk}x_{irt}&\leq C_{kt}\quad\text{for every }k,t.
\end{aligned}
\label{scale-eq-capacity}
\end{equation}
Multi-slot work requires occupancy constraints over its full duration;
dependencies require precedence constraints. Resource-specific exclusion
must cover all conflicting operations, not just a global worker count.
The model neither reserves policy quota nor proves authorization: quota
consumption remains an atomic gate obligation. An objective could penalize
waiting, missed review deadlines, and discarded preparation, but no objective
weight may relax eligibility.

Slow review needs an explicit fairness policy. Earliest-deadline ordering
alone can starve requests without urgent labels. Per-tenant service shares,
aging, and capacity reserved for older ambiguous cases are candidate
controls, not demonstrated guarantees. When deadlines become infeasible,
the system must defer with notice, reject, or escalate ownership; expiration
must never become automatic approval. Bounded queues also require overflow
rules and cancellation propagation. Offered arrivals, queued requests,
admission attempts, durable admissions, and completed effects are separate
counts. A closed-loop concurrency test cannot establish stability under an
independent offered load, particularly when preparation changes arrivals.

A capacity study should vary burstiness, review duration, resource skew,
and cancellations as well as average arrival rate. If offered resource
demand persistently exceeds available service, ordering alone cannot avoid
backlog growth: admission control or additional capacity is required.
Backpressure may improve the experience of accepted work while rejecting
more arrivals, so both outcomes must remain visible. Likewise, a planned
resource slot is only a scheduling decision. If correctness depends on
exclusive resource access, a lock, conditional operation, or equivalent
enforcement must retain that exclusion through the relevant effect;
observing an uncontended queue is insufficient.

\subsection{Active support from observable state}
\label{scale-support}

A single metaphorical measure of workload difficulty cannot determine how
much preparation is safe. The proposed support controller instead observes
pass expiry, the freshness mechanism's justified horizon, contention, and
review deadlines. For an already issued pass, if $T_i^{\rm exp}$ is its signed expiry and
$T_i^{\rm fresh}$ the end of a justified evidence-validity interval, define
remaining slack
\[
 s_i(t)=\min(T_i^{\rm exp},T_i^{\rm fresh})-t-\Delta_i,
\]
where $\Delta_i$ is the deployment's allowance for clock uncertainty and
remaining admission work. A copied timestamp or matching digest cannot
establish $T_i^{\rm fresh}$; leases require actual stability obligations
\citep{rwLeases1989}. Without such justification, freshness slack is unknown.
Queue-delay estimates guide resource allocation but are not safety bounds.
The frozen adapter checks validity at the post-lock admission-check sample,
not at commit return or eventual execution.

Before a pass exists, there is no signed $T_i^{\rm exp}$. Preparation
planning must instead use a prospective window permitted by issuer policy
and justified evidence, not an assumed future grant. Once issued, the
controller uses the actual signed expiry and reassesses the remaining wait.
A planning window grants no authority, and postponing a request never
extends an existing pass. If the issuer does not grant the planned window,
the schedule must change or the action must remain unexecuted.

Figure~\ref{scale-fig-support} maps these observations to proposed decisions.
Unknown or non-positive slack precludes relying on prepared evidence;
positive slack must still accommodate the planned wait. Before issuance,
contention can therefore postpone minting; after issuance, it can require
reassessment or replacement. An approaching review deadline preempts
optional preparation. Scheduling cannot substitute a predicted delay for
final revalidation.

\begin{figure}[tbp]
\centering
\begin{tikzpicture}[
  box/.style={zg node,align=center,font=\small,
    inner sep=5pt,minimum height=1.00cm},
  flow/.style={zg flow}]
  \node[box,text width=13.7cm,fill=zgBlue] (observe) at (0,0)
    {Observe justified freshness slack, resource contention,
     and time to review deadline};
  \node[box,text width=5.2cm,fill=zgPeach] (stale) at (-4.0,-1.75)
    {Slack unknown or insufficient};
  \node[box,text width=5.2cm,fill=zgPeach] (staleout) at (3.5,-1.75)
    {Refresh evidence / review; otherwise hold or refuse};
  \node[box,text width=5.2cm,fill=zgMint] (free) at (-4.0,-3.20)
    {Slack sufficient; resource available};
  \node[box,text width=5.2cm,fill=zgMint] (freeout) at (3.5,-3.20)
    {Prepare, or retain valid pass;\\recheck at admission};
  \node[box,text width=5.2cm,fill=zgCream] (busy) at (-4.0,-4.65)
    {Slack sufficient;\\resource contended};
  \node[box,text width=5.2cm,fill=zgCream] (busyout) at (3.5,-4.65)
    {Defer unissued work / reassess pass;\\schedule within justified horizon};
  \node[box,text width=5.2cm,fill=zgRose] (deadline) at (-4.0,-6.10)
    {Review deadline near or infeasible\\Overrides optional preparation};
  \node[box,text width=5.2cm,fill=zgRose] (deadlineout) at (3.5,-6.10)
    {Prioritize feasible review; otherwise escalate or refuse};
  \draw[flow] (observe.south) -- (0,-0.83) -- (-7.2,-0.83)
    -- (-7.2,-6.10) -- (deadline.west);
  \draw[flow] (-7.2,-1.75) -- (stale.west);
  \draw[flow] (-7.2,-3.20) -- (free.west);
  \draw[flow] (-7.2,-4.65) -- (busy.west);
  \draw[flow] (stale) -- (staleout);
  \draw[flow] (free) -- (freeout);
  \draw[flow] (busy) -- (busyout);
  \draw[flow] (deadline) -- (deadlineout);
\end{tikzpicture}
\caption{Proposed active-support decision map linking observed conditions
to preparation, review and scheduling choices. Before issuance the horizon
is a policy-constrained planning window; afterward it uses actual signed
expiry. ``Sufficient'' means justified for the planned step, not a fixed timing budget.
Deadline priority cannot override freshness,
authority, or final revalidation. No benefit is estimated by this map.}
\label{scale-fig-support}
\end{figure}

Preparation is most plausible when final bytes become stable before their
release, review depends on slowly changing facts, and the action is likely
to be used. A scheduled immutable upload is a better candidate than a
mutation whose recipient, amount, or payload is selected from a preceding
response. It can hurt when predictions branch, users cancel, policies churn,
or durable admission is already the bottleneck. Earlier minting then adds
expiry and replacement work while competing with requests ready to proceed.

Prefetch should have a bounded horizon, an outstanding-work limit, and
cancellation on relevant change. Operators should distinguish warming
connections or fetching evidence from issuing authorization. A scheduler
reservation is not a pass, and speculative preparation must not perform
the external effect. Track unused, expired, superseded, and cancelled
preparations together with their compute, network, and review costs.
Changing the horizon is an experiment in useful preparation versus wasted
work, not permission to lengthen validity without a security justification.

For an immutable upload, an operator could first prepare only after the
body and object identity are fixed, then compare several bounded horizons
against synchronous approval. The comparison should record whether a pass
was actually consumed before relevant evidence changed, not merely whether
minting succeeded. For a workflow with unresolved downstream choices, the
safer experiment warms transport and retrieves permitted evidence while
delaying the action-bound signature. These are different interventions and
should be reported separately. A useful horizon for one action family does
not establish a default for payments, identity changes, or deployment
operations with different freshness requirements.

\subsection{From observation to complete mediation}
\label{scale-mediation}

Shadow deployment records reconstructed actions and hypothetical decisions
without preventing dispatch. It can reveal missing adapters, mapping errors,
and operational friction, but cannot demonstrate prevented effects. Enforced
deployment must mediate every in-scope route before the effect, including
retries, alternate credentials, direct SDK calls, shell subprocesses, and
background workers. Final bytes and authenticated caller context must remain
bound through dispatch; the agent must not possess an alternative route
around the gate. This is the complete-mediation requirement, not a property
obtained by adding a log entry \citep{rwSaltzer1975}.

The research interface does not meet this requirement against a hostile
bearer holder. Its trusted runner controls the privileged token, and the
effect service checks a caller-supplied receipt binding rather than
independently proving gate persistence. Holder fields are not proof of
possession. Deployment would need exclusive trusted credential custody and
non-bypassable dispatch, or receiver-verifiable admission bound to the
request and consumption domain. Neither hardening is established by the
current API. Key trust, snapshot administration, clock behavior, and storage
rollback protection also need named operational owners.

Migration from shadow to enforcement should therefore test the deployment
topology, not just switch a decision flag. An operator should attempt a
direct destination call from the agent's actual credential context, stop
the gate, and verify that alternate workers cannot dispatch the protected
operation. Observation-only adapters must remain visibly labeled, and
uncovered routes must not be included in an enforcement-coverage claim.
An emergency path can be legitimate, but needs separately authenticated
authority, recorded use, and an explicit scope; a fail-open fallback hidden
inside normal retry logic defeats the boundary being evaluated.

Scaling consumption across hosts requires a coherent single-use authority
or a proved ownership partition with fencing and migration rules. Copying
SQLite databases cannot provide global nonce or aggregate-quota enforcement.
Recovery likewise needs an explicit delivery protocol: durable admission
may precede a crash with no effect, while a lost response may hide a completed
effect. An outbox can retain dispatch intent, but destination idempotency
and reconciliation remain necessary \citep{rwAWSOutbox}. The reference
receipt is not an implemented autonomous outbox. Policies requiring
execution-time validity need receiver checks or another justified protocol.

\subsection{Responsibility and comparative deployment choices}
\label{scale-responsibility}

The policy owner specifies allowed operations, dependencies, and acceptable
failure behavior. The issuer implements review and protects signing keys.
The adapter owner establishes faithful observations and coverage; the gate
operator maintains consumption and receipt durability. The destination owner
defines conditional execution and retry semantics. Operations owns queue
limits, incident handling, and recovery; an independent evaluator checks the
acceptance evidence. These responsibilities can share an organization, but
should not disappear into a general claim that governance is enabled.

Earlier OSuite architecture descriptions and comparisons with vendor-native
governance are design context, not actual product benchmarks in this study.
The fixture comparators do not measure commercial platforms. Native controls
may already provide appropriate request binding, identity integration, and
operational support; an embedded policy engine can also avoid remote review
\citep{rwOPAPerformance,rwCedar2024}. An additional pass layer is justified
only if its integration and evidence benefits outweigh duplicated controls
and operational cost for the selected topology. Cross-runtime portability
requires adapter-specific validation, not assumed identical fingerprints.
A buyer may reasonably choose native controls alone, a complementary gate,
or no deployment. Universal vendor superiority is neither necessary nor
supported.

\subsection{Buyer acceptance and go/no-go evidence}
\label{scale-acceptance}

A pilot should begin with a named action inventory, independent expected
decisions, agreed failure limits, and a rollback owner. Capture a before-state
baseline using the existing controls, then an after-state under the candidate
configuration with the same workload mix and offered-arrival protocol.
Separate shadow and enforced results. Record configuration and policy
changes so apparent improvements are not attributed to preparation when
coverage or obligations changed. The following tests are acceptance gates,
not outcomes reported here.

Before collecting results, the buyer should specify which populations may
benefit and which may not regress: interactive callers, background jobs,
reviewers, and destination operators can experience different costs.
Retain per-action arrival and terminal-state timestamps, including attempts
that never acquire a worker. Fix the observation window and disposition of
unfinished requests so draining a backlog outside the window does not
create an apparent improvement. A favorable average with unacceptable
slow-review tails, incomplete destination reconciliation, or increased
operator intervention should not pass merely because the prepared boundary
looks faster.

\begin{enumerate}
  \item \textbf{Coverage and unauthorized effects.} Enumerate actual
  side-effect paths and reconcile them with gate observations and destination
  records. Report action coverage over all in-scope attempts, unsupported
  cases, and independently adjudicated false allows over expected-not-allow
  attempts, with errors and uncertainty visible. Test bypass, payload
  substitution, stale evidence, and replay. Any unexplained bypass or false
  allow is a no-go pending investigation; zero observed cases is not proof
  of universal safety.

  \item \textbf{User-visible waiting and review fairness.} Measure
  queue-inclusive p95 and p99 from externally recorded arrival to decision
  and separately to verified effect. Retain failures, cancellations, and
  timeouts; report successful latency conditionally rather than assigning
  failures zero duration. Include preparation, dwell, review wait, lock wait,
  and recovery. Compare tenants and slow-review classes for starvation and
  deadline misses. Do not add phase percentiles or infer an SLA from the
  worker-admission boundary.

  \item \textbf{Bounded failure and recoverability.} Exercise issuer loss,
  stale snapshots, storage failure, queue overflow, process crashes, ambiguous
  acknowledgements, and clock anomalies. Verify time to refusal, maximum
  outstanding work, retained intent, and recovery ownership against limits
  agreed before testing. Reconcile before-state and after-state at the
  destination, not just receipt presence. Unbounded retry, unexplained
  effects, or rollback that re-enables consumed authority blocks rollout.

  \item \textbf{Useful preparation and operational economics.} Report
  prepared actions used before expiry, wasted preparation by cause, repeated
  reviews, and issuer and storage demand. Compare cost per verified completed
  action and total workload cost, including review labor, infrastructure,
  integration, incident handling, and audit effort. Include the cost of
  rejected and abandoned work. No customer savings follow from a shorter
  boundary alone; the buyer must decide whether the observed tradeoff meets
  its own cost and response requirements.
\end{enumerate}

Expansion should proceed by action family and enforcement boundary only
after those gates pass. A failure may justify a narrower supported scope,
fresh synchronous review, or retaining the existing control rather than
weakening eligibility. This makes deployment a falsifiable operational
decision while keeping proposed scale mechanisms separate from the measured
reference system.

\section{Limitations and Deployment Implications}
\label{sec:limits}

\paragraph{The strongest guarantee depends on freshness not implemented here.}
The gate protects its local snapshot against per-request substitution, but
does not keep it synchronized with every policy authority or resource.
Configuration hashing is not state observation. Offline revocation remains
bounded only by whatever expiry, update, or lease policy a deployment
actually enforces. Clock trust and permitted skew must be specified as part
of that policy. A conservative deployment should route unsupported or
uncertain dependencies to fresh review; it should not advertise current
authorization solely because a local hash matches.

\paragraph{The trusted runner is part of the security boundary.}
Bearer credentials, holder fields, and caller-supplied receipts do not stop
a malicious authorized runner. Production use with untrusted presenters
would need authenticated observations and an enforcement design that binds
the receiver to verifiable admission, or prevents bypass by construction.
Key bootstrap, rotation, revocation distribution, administrative access to
SQLite, and rollback protection also remain deployment obligations. A signed
pass cannot protect a database that an adversarial host restores to a
pre-consumption snapshot.

\paragraph{Single-use exactness has a real lifecycle cost.}
Every changed action needs review and signing, and every admission needs
durable state. Prepared batching can lengthen the time between mint and use,
increase expiry risk, and make complete lifecycle slower even when dispatch
readiness looks better. The present cloud workload knows the intended body
in advance and does not quantify abandoned preparations, inaccurate
prediction, human-review latency, or a stream of interdependent agent
decisions. Neither the issuer nor the gate is relieved of total work merely
by moving it earlier.

\paragraph{One local domain is not a global replay service.}
SQLite serializes writes and supplies a practical local transaction boundary.
A busy writer can delay admission or cause fail-closed availability loss.
The study's concurrency points do not establish unbounded scale, arrival-rate
stability, or a service-level guarantee. Cross-region single use would require
a coordinated consumption authority, a correctness-preserving ownership
partition with migration rules, or a different protocol. Copying the same
pass and snapshot into independent regions is a direct counterexample.

\paragraph{Admission is not delivery or completion.}
The local receipt records what was authorized, not what an independent
service necessarily did. A crash can consume authority without performing
the action. Retrying delivery requires retained intent, bounded
idempotency semantics, and an explicit response to ambiguous outcomes.
The blob service's content checks and conditional creation support
the selected operation only. Policies that depend on execution-time state
need receiver-side checks or stronger coordination. This artifact does not
offer a generic distributed transaction or exactly-once external executor.

\paragraph{Evaluation breadth and independence are limited.}
Authored mutation cases exercise selected obligations rather than a natural
distribution of agent mistakes or adversarial traffic. Repeated domain
renderings are not new independent ground truth. Canonicalization is
evaluated through current project representations, whose fingerprints
include runtime; neither parser coverage nor cross-runtime semantic identity
is measured. The embedded baseline and omission variants are diagnostic
comparators, not production implementations of alternative systems. Cloud
results are from one configured deployment and storage backend, with
bounded concurrency, ordered cells, and alternating rather than fully
randomized mode order.

\paragraph{Verification is narrower than independent attestation.}
The new artifact verifier checks signatures, record structure, counts,
source hashes, decision replay, receipt links, and summary consistency.
It cannot establish that a supplied timestamp was honestly measured or that
an unpinned bundle was not replaced wholesale. The same source-defined
semantic workload is replayed; no independent policy annotator is introduced.
Separately conducted external-verifier and canary exercises described in
earlier project work are prior background only. They are not runs of this
revalidation experiment, are excluded from its denominators and tables, and
are not evidence that its current local artifact has been externally
replicated.

\section{Artifact and Reproducibility}
\label{sec:artifact}

The project home is
\url{https://github.com/OndCo/Agent-Action-Boundary-Benchmark}.
The arXiv source package includes
\path{anc/zerogate-revalidation-artifact.tar.xz} and its accompanying README.
This ancillary archive provides runnable code, frozen raw records, manifests,
verification reports, exact source, and run instructions. The source package
also includes the manuscript, bibliography, generated result include, and
empirical figures.

The semantic producer writes \texttt{records.jsonl} and
\texttt{manifest.json}, including seed, environment, input and record
checksums, source-file hashes, method definitions, counts, and summaries.
It checks source identity around the run rather than silently completing
against changed evaluator code. The cloud producer records configuration
and environment, each attempt and available timing, gate and service
audits, summaries, and checksums. The fault producer preserves per-case
inputs, subprocess outcomes, audits, and its overall report. New output
directories are required so reruns do not overwrite evidence.

\path{verify-artifact.mjs} independently reconstructs artifact summaries
and validates pass signatures without importing the durable gate's signature
checker. For semantics it also replays the checksum-verified local source
and compares input labels and actual outputs. It does not execute arbitrary
code supplied inside an untrusted data bundle. Externally pinned record
hashes or trusted issuer keys strengthen provenance; checksums stored only
alongside their own data detect inconsistency, not a coordinated replacement.
The supplement must preserve the exact source revision used by a manifest,
not just a later checkout with the same filenames.

\path{verify-cloud-store.mjs} and \path{verify-faults.mjs} additionally open
checkpointed SQLite files in immutable, read-only mode and reconcile all
five stored tables with the exported evidence. They reject nonempty journals
and inconsistent rows. This checks the retained database bytes, not a signed
attestation of historical fsync or physical power-loss durability.

Reproduction requires a Node.js runtime supporting the used SQLite API, an
operator-provisioned service and persistent gate directory, a bearer-token
file, and appropriate HTTPS trust when using a remote service. Reproducing
the cloud study additionally requires Azure storage permissions and managed
identity configuration. The local SQLite backend is useful for unit and
integration tests but is not a substitute for the Azure result. Private
keys, bearer tokens, and deployment secrets must not enter the public
supplement. The build depends on \texttt{revalidation-results.tex}; its macros
are generated from the frozen outputs rather than populated with placeholder
measurements.

\section{Conclusion}
\label{sec:conclusion}

\zerogate{} makes earlier exact-action approval usable at a later durable
admission boundary without claiming that approval work disappears.
Its preservation argument is conditional on complete dependencies, current
evidence, faithful mediation, and atomic consumption. The reference
implementation makes local nonce, quota, and receipt consistency concrete,
while exposing the remaining limits of stale snapshots, trusted bearer
APIs, and non-atomic external effects. The evaluation separates authored
semantic conformance from fault histories and actual cloud timing, and
charges prepared-batch dwell to lifecycle. These boundaries, rather than
an unconditional speed or exactly-once claim, define the contribution.

For an engineering organization, the practical opportunity is to align
authorization with the time at which its inputs become known, while retaining
a reliable final check at the point of consequence. The transit inspiration
helps explain that allocation of work; the action representation, authority
contract and admission transaction make it inspectable. The scale mechanisms
outlined here describe how to investigate useful preparation and finite
capacity without treating scheduling as a new source of permission. Their
implementation and evaluation remain separate work. A deployment should
advance when its own action coverage, evidence quality and complete user
experience improve, not merely when a local gate reports a smaller number.

\par\bigskip
\appendix
\section{Field Guide to the Evidence}
\label{app:field-guide}

This appendix is for a reader moving between the report and the downloadable
artifact. The field groupings explain the implemented profile; they are not
a replacement schema or a new interoperability standard. The complete
record includes null fields and configuration labels as issued. Those bytes
must be preserved for recomputation, even when a label is less descriptive
than the implementation discussed in the text.

\subsection{Locate the worked observation}

In the unpacked ancillary archive, \path{data/cloud/records.jsonl} contains
the worked observation \ReportCaseId{}. Find the record by its identifier,
not by an assumed line number. The body is 128 UTF-8 bytes and its
\texttt{payload\_bytes} field agrees with that length. The following values
are extracted by a presentation generator that refuses any cloud file whose
SHA-256 differs from the frozen record digest in this report.

\begin{table}[!htbp]
\centering\small
\begin{tabularx}{\linewidth}{L{0.21\linewidth}Y}
\toprule
Object & Complete value from the selected observation \\
\midrule
Operation ID & \ReportCaseId{} \\
Pass ID & \ReportCasePassId{} \\
Payload SHA-256 & \ReportCasePayloadHash{} \\
Action fingerprint & \ReportCaseFingerprint{} \\
Pass hash & \ReportCasePassHash{} \\
Receipt hash & \ReportCaseReceiptHash{} \\
Azure ETag & \ReportCaseETag{} \\
\bottomrule
\end{tabularx}
\caption{Lookup values, not a self-contained proof. Signature checking also
needs the complete pass, envelope and trusted public key. Content checking
needs the actual body, and receipt-store reconciliation needs the database.}
\label{tab:worked-identifiers}
\end{table}

The action fingerprint and payload hash are intentionally different. One
commits to a canonical object containing operation, destination, object,
runtime and payload metadata; the other commits only to the bytes. The pass
hash additionally binds the authority and validity conditions. The receipt
hash identifies the recorded admission statement. Equality is meaningful
between corresponding fields, not across every hash in the bundle. An ETag
is a destination-issued version identifier, not an alternative cryptographic
proof of the action's authorization.

\subsection{Read the pass as several obligations}

\begin{table}[H]
\centering\small
\begin{tabularx}{\linewidth}{L{0.28\linewidth}Y}
\toprule
Field group & Question it answers, and what it does not establish \\
\midrule
\path{issuer}, envelope & Which configured issuer signed this pass hash?
The trusted key mapping, not a caller-chosen key alone, supplies trust. \\
\path{holder} & Which agent, principal, workspace and session does the
grant name? Equality to trusted observations is not proof of possession. \\
\path{action_binding} & Which canonical action and final payload commitment
are approved? The observer must reconstruct the actual outgoing request. \\
\path{authority.granted} & Which operations, resources, destinations, systems
and constraints bound the grant? These do not override the exact-action
fingerprint with a reusable class permission. \\
\path{authority.requested_ref} & Which upstream request is referenced?
It is null in the selected cloud case; no skill-contract execution is implied. \\
\bottomrule
\end{tabularx}
\caption{Identity and authority fields of the implemented exact-action profile.}
\label{tab:field-identity}
\end{table}

The upstream authorizer, not this reference gate, is responsible for keeping
the grant within the original requested scope
(Section~\ref{foundation-authority-layers}). The gate checks the resulting
signed grant and exact action. A request reference alone does not demonstrate
that upstream check.

\begin{table}[H]
\centering\small
\begin{tabularx}{\linewidth}{L{0.28\linewidth}Y}
\toprule
Field group & Required interpretation at admission \\
\midrule
\path{policy_binding}, \path{state_binding} & Compare supported bindings
against gate-owned evidence. Agreement with stale configuration remains stale;
arbitrary current resource state is not observed by a configuration digest. \\
\path{revocation_binding} & Check the supported epoch against the local
snapshot. Absence of a newer epoch locally is not proof that none exists. \\
\path{validity} & Check the inclusive validity interval at the post-lock
assessment sample. This does not establish validity at later external execution. \\
\path{nonce}, \path{budget} & Enforce single use and applicable capacity in
one coordinated store. Different regional databases are different domains. \\
\path{evidence_profile}, \path{commit_profile} & Describe requested evidence
and routing obligations. Descriptive fields do not demonstrate their discharge. \\
\bottomrule
\end{tabularx}
\caption{Freshness, consumption and evidence fields. Read these with the
actual gate implementation, not just their names.}
\label{tab:field-residual}
\end{table}

For example, the issued pass's \texttt{receipt\_sink} label is
\texttt{local\_append\_only\_buffer}, inherited from the general profile.
The research adapter actually enforces its receipt obligation using SQLite
write-ahead logging (\texttt{journal\_mode=WAL}) and the separate
\texttt{synchronous=FULL} durability setting, recording the row in its
admission transaction. That label neither
makes the database append-only against administrators nor substitutes for
checking the retained store. The core's evidence flags express obligations
the adapter must discharge; presenting a boolean is not an independent
persistence attestation. Changing the signed label in a published record
would invalidate the original pass rather than improve its documentation.

The \texttt{risk} and \texttt{privacy\_profile} fields likewise do not turn
the fixture issuer into a validated risk classifier or a data-loss-prevention
engine. The cloud workload uses synthetic content. Any commercial claim
about sensitive-data detection requires its own input population, labels
and evaluation, separate from exact content binding.

\subsection{Separate the records by what they witness}

\begin{table}[!htbp]
\centering\small
\begin{tabularx}{\linewidth}{L{0.34\linewidth}Y}
\toprule
Artifact location & Evidence role \\
\midrule
\path{data/cloud/records.jsonl} & Per-attempt request, signed pass,
decision, timings, service result and readback. Failures remain attempts. \\
\path{data/cloud/gate.sqlite} & Checkpointed consumption and receipt state,
reconciled with the exported audit and attempt records. \\
\path{data/azure-direct-readback/} & Actual downloaded object bytes obtained
through the separate Azure CLI retrieval path. \\
\path{data/azure-direct-inventory.json} & Destination metadata used to
cross-check lengths and ETags, not an independent signed attestation. \\
\path{data/faults/} & Controlled histories, worker results and retained
state for race, mutation, expiry and recovery cases. \\
\path{data/semantic/} & Authored inputs and outcomes under the pinned
scenario generator; not completed independent human annotation. \\
\path{MANIFEST.json} & File sizes and digests for the packaged payload;
compare its digest with an externally obtained expected value. \\
\bottomrule
\end{tabularx}
\caption{Navigation inside the independently unpackable evidence archive.}
\label{tab:artifact-map}
\end{table}

There is a useful escalation of questions. A matching manifest establishes
package consistency. Signature checks establish issuer binding against the
selected trusted key. Decision replay establishes agreement with the pinned
implementation. Database reconciliation establishes consistency of the
retained persistent rows. Direct readback establishes that the archived
bytes match the requested content. None independently witnesses every event
in the historical execution, and combining them must not erase their trust
boundaries. The archived material is sufficient for these offline checks
without a Studio login or access to the original Azure account.

Section~\ref{sec:artifact} identifies the corrected evidence package; the
README accompanying \path{anc/zerogate-revalidation-artifact.tar.xz} gives
the full verification commands. For an
initial inspection, extract the ancillary archive to a new directory and
run its inventory verifier, then the cloud, semantic, database, fault and
readback checks. A fresh cloud experiment requires new infrastructure and
credentials; offline recomputation does not. The historical
\texttt{zerogate-v0.1} public release is not the corrected evidence manifest.

\section{Decision Checklist for Readers}
\label{app:reader-checklist}

A research or procurement discussion should end with an explicit boundary
of what has been accepted. The following questions make assumptions,
responsibilities and acceptance criteria inspectable.

\begin{enumerate}
  \item Can an independent party reconstruct the same final request from
  the artifact, including payload, destination and identity assumptions?
  If not, action-binding claims are incomplete even when hashes are present.
  \item Is the presented approval for one exact action or for a class of
  actions? If it is a class, where are the membership rule, attenuation and
  aggregate accounting defined? This report evaluates the exact-action case.
  \item Who can update policy/state observations and issuer keys, and what
  justifies their freshness at the required instant? Expiry alone is not a
  complete answer for policies requiring immediate revocation.
  \item What prevents a caller from bypassing the gate or replacing bytes
  after verification? Identify the actual credential and dispatch topology,
  not just an SDK hook in a diagram.
  \item Which durable operation consumes authority, and how does recovery
  distinguish unused, admitted and already executed requests? An admission
  receipt and a destination receipt should not be interchangeable.
  \item Does the performance comparison include the same security work,
  input population and observation window? Are preparation, waiting, errors
  and destination verification counted, and are the compared clocks equal?
  \item Which findings are measured here, which come from authored fixtures,
  and which are proposed designs? Unsupported runtime routes and unimplemented
  scheduling mechanisms must remain visible in the coverage statement.
\end{enumerate}

These questions can be used with a native runtime control, an embedded
policy engine, or a separate governance service. They do not assume that
every deployment needs another platform. They identify the evidence needed
to decide whether a prepared-action boundary improves a particular workflow
without weakening its authority or operational accountability.

{\small
\bibliographystyle{plainnat}
\bibliography{references,report-foundations}
}
\end{document}